\crefname{section}{Sec.}{Secs.}
\Crefname{section}{Section}{Sections}
\Crefname{table}{Table}{Tables}
\crefname{table}{Tab.}{Tabs.}
\newcommand{\RR}{\mathbb{R}}
\theoremstyle{plain}
\newtheorem{theorem}{Theorem}[section]
\newtheorem{proposition}[theorem]{Proposition}
\newtheorem{lemma}[theorem]{Lemma}
\newtheorem{corollary}[theorem]{Corollary}
\theoremstyle{definition}
\newtheorem{definition}[theorem]{Definition}
\theoremstyle{remark}
\newtheorem{remark}[theorem]{Remark}
\NewDocumentCommand{\codeword}{v}{%
\texttt{\textcolor{blue}{#1}}%
}
\newcommand{\cwg}[1]{}
\newcommand{\hjk}[1]{}
\newcommand{\omitt}[1]{}
\title{Internal Representations of Vision Models Through the Lens of Frames on Data Manifolds}
\author{%
  Henry Kvinge$^{*\dagger}$, Grayson Jorgenson$^*$, Davis Brown$^*$, Charles Godfrey$^*$, Tegan Emerson$^{*\ddagger\diamond}$ \\
  $^*$Pacific Northwest National Laboratory \\
  $^\dagger$Department of Mathematics, University of Washington\\
  $^\ddagger$Department of Mathematics, Colorado State University,\\
$^\diamond$Department of Mathematical Sciences, University of Texas, El Paso\\
  \texttt{first.last@pnnl.gov} \\
}
\begin{document}

\maketitle

\begin{abstract}
While the last five years have seen considerable progress in understanding the internal representations of deep learning models, many questions remain. This is especially true when trying to understand the impact of model design choices, such as model architecture or training algorithm, on hidden representation geometry and dynamics. In this work we present a new approach to studying such representations inspired by the idea of a frame on the tangent bundle of a manifold. Our construction, which we call a \emph{neural frame}, is formed by assembling a set of vectors representing specific types of perturbations of a data point, for example infinitesimal augmentations, noise perturbations, or perturbations produced by a generative model, and studying how these change as they pass through a network. Using neural frames, we make observations about the way that models process, layer-by-layer, specific modes of variation within a small neighborhood of a datapoint. Our results provide new perspectives on a number of phenomena, such as the manner in which training with augmentation produces model invariance or the proposed trade-off between adversarial training and model generalization.
\end{abstract}

\section{Introduction}


Deep neural networks distill input into semantically meaningful high-level features that lead to robust predictions. Understanding the mechanics of this process this process is important since it is one way of explaining how these models perform at levels that rival human experts. 
Yet, because of the richness of deep learning representations and the high-dimensional spaces they inhabit, existing techniques by necessity provide an incomplete picture of the full relationship between model, training, data, and representation. In particular, we note that while many tools study the large-scale structure of representations (e.g., representation topology \cite{naitzat,rieck2018neural}, mutual information \cite{shwartz2017opening}, model identifiability \cite{roeder2021linear}), there are fewer tools that focus on model behavior in small neighborhoods around a datapoint. Motivated by this, we describe a new tool to illuminate deep learning representations at the local level. To do this we leverage the notion of a frame from differential geometry. A $k$-frame is a choice of $k$ linearly independent vectors from the tangent space of each point $x$ in a $m$-dimensional manifold $M$. The span of these vectors defines a subbundle of the tangent bundle of $M$. By applying the first $\ell$-layers of a deep learning model to a $k$-frame we can see how a model deforms, compresses, or expands specific directions in the immediate neighborhood around a datapoint. We call the resulting structure a {\emph{neural $k$-frame}} (since it will generally not be a true frame). 

Neural frames have several properties that make them a valuable tool which is complementary to other methods of studying neural representations. The first is that through the choice of the input $k$-frame, one can study the ways in which the representation of an input example changes with respect to specific modes of variation. In this work for example, we explore frames that capture the directions of infinitesimal image augmentations, frames that point in the direction of noise, and frames generated by a diffusion model. As we show in Section \ref{sect-experiments}, these different frames are processed in radically different ways by a model even when they all emanate from the same datapoint. There is not an obvious way to do an equivalently targeted exploration for most other methods such as intrinsic dimension or topology. Secondly, a neural frame can be calculated from a single datapoint. This is in contrast to other methods, including all those discussed in \cref{sec:rw}, that focus on large-scale structure and hence require a whole dataset for calculation. Finally, neural frames are an intuitive and flexible construction that can often be integrated into existing tools. For example, in Section \ref{sect-frame-CKA} we show how neural frames can be combined with centered kernel alignment (CKA) to create a method of comparing the local properties of two different representations of a single datapoint.

As a proof of concept, we construct several different flavors of frames for image datasets and then apply them to a range of models with varying architectures and training methods. From these preliminary studies we are able to make a number of observations about small-scale neural representation geometry. (i) Training a model with augmentation causes it to preserve neural frames generated by small augmentations, contradicting intuition that such models would learn to collapse such modes of variation as invariance is learned. (ii) Increasing the $\epsilon$ value used in adversarial training causes a model to increasingly preserve noise directions around a datapoint at the expense more semantically meaningful augmentation directions. (iii) A model's preservation of augmentation directions correlates with its accuracy.

\begin{figure*}[h]
\begin{center}
\includegraphics[width=.88\linewidth]{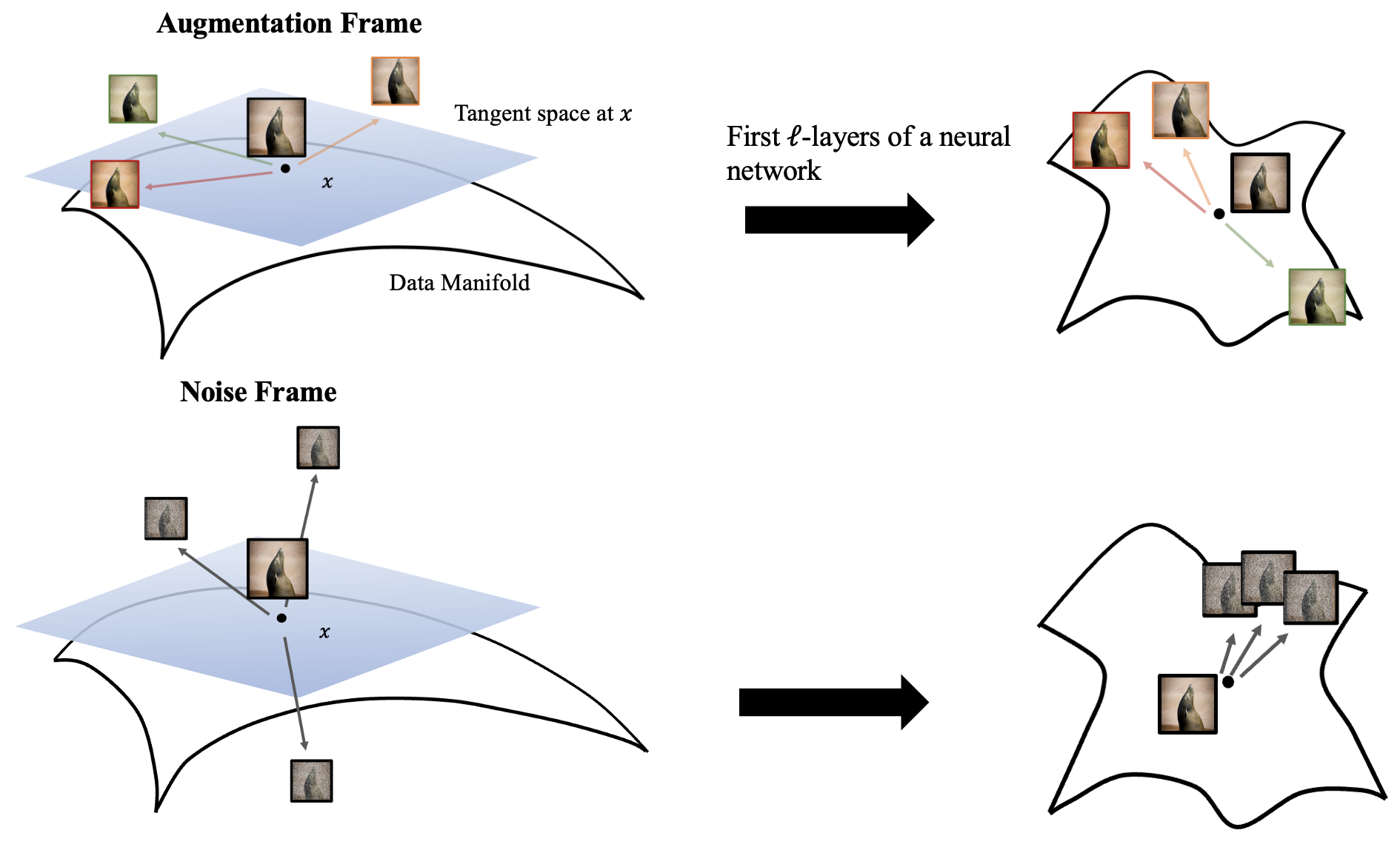}\\
\end{center}
\caption{A cartoon visualizing an augmentation frame with three tangent vectors (derived from hue shift, brightness shift, and a small rotation and crop) and a noise frame with three noise vectors. We think of the  augmentation vectors as \emph{approximately} living in the tangent space of point $x$ on the data manifold while the noise vectors do not. Our cartoon illustrates that models process different frames differently with the noise vectors in this cartoon being collapsed by the first $\ell$ layers of the network. \label{fig-diagram}}
\end{figure*}

In summary, our contributions in this work include the following:
\begin{itemize}\itemsep0em
    \item We describe {\emph{neural frames}}, a flexible tool 
    which can be used to study the small-scale geometry of neural representations. 
    \item We ground the intuitive notion of a neural frame within the theory of frames on a manifold, allowing us to connect our measurements of real models with geometry.
    \item We apply neural frames to a range of models with varying architectures and training histories and show that these are reflected in the local geometry of feature space.
\end{itemize}


\section{Related work}
\label{sec:rw}

\textbf{Mathematical tools to study the internal representations of deep learning models:} 
Understanding the way that neural networks represent input data throughout their layers has been a prominent theme in both theoretical and empirical deep learning research.
While the standard suite of dimensionality reduction methods are used for visualization \cite{colah}, these often fail to be sufficiently quantitative for analysis. Instead, tools from geometry, topology, and probability theory have found application as these fields routinely study spaces that are challenging for humans to understand using visual intuition. Along these lines, several works have explored neural representations in terms of a range of mathematical descriptors, from properties derived from algebraic topology \cite{naitzat,barannikov2021representation}, to intrinsic dimension \cite{ansuini2019intrinsic,pope2021intrinsic,ma2018characterizing}, to mutual information \cite{shwartz2017opening}. 

\textbf{Comparing representations:} Beyond understanding them in isolation, in many cases it is useful to be able to compare the representations of the same data by different models or different layers of the same model. Notable methods include: canonical correlation analysis (CCA) \cite{hardoon2004canonical}, along with its variants SVCCA \cite{raghu2017svcca}
and PWCCA \cite{morcos2018insights} (see \cite{klabunde2023similarity} for a recent survey outlining the relationship between these and other methods). Another method of comparing representations neural stitching \cite{lenc2015understanding} which (roughly) measures whether one model can learn from another's representation. In this work we use centered kernel alignment (CKA) \cite{kornblith2019similarity}, one of the most popular tools for comparing internal representations of deep learning models. 

\textbf{Complementary nature of the tools presented in this paper:} The works discussed above focus on large-scale structure (that is, they consider the set of hidden features of an entire dataset). Our work complements them by focusing on the small-scale structure of representations as well as structure that targets particular modes of variation (e.g., how a model represents noise directions vs directions corresponding to augmentations). 


\section{Neural frames}
\label{sect-neural-frames}

In this Section we introduce neural frames. Where possible, we center our constructions around established notions within manifold geometry since this allows us to prove a number of useful statements. A review of relevant geometric ideas such as the concept of a vector bundle, as well as some of the lemmas that support statements in this section can be found in Section \ref{appendix-geo-background} of the supplementary material.

A {\emph{$k$-frame}} of a finite vector space $V$ of dimension $m \geq k$ is a set of $k$ linearly independent vectors. A {\emph{$k$-frame on $m$-dimensional manifold $M$}} is a choice of $k$-frame for each tangent space $T_xM$ which varies smoothly with respect to the structure of $M$. Given \(k\) vector fields, there is always an open set where they form a \(k\)-frame, whose span is a subbundle of the tangent bundle.

Suppose that we have a data manifold $M$ embedded in ambient space $\mathbb{R}^n$ along with smooth functions $\mathcal{F} = \{f_1, \dots, f_k\}$ with $f_i:(-1,1) \times M \rightarrow M$. If $f_1, \dots, f_k$ satisfy the conditions of \cref{cor-bundle-from-functions}, then we obtain a sub-vector bundle of the tangent bundle of $M$, $V_{\mathcal{F}}$, along with a frame $v_1(x), \dots, v_k(x)$.

Suppose that 
$F = F_\ell \circ \dots \circ F_1: \mathbb{R}^n \rightarrow \mathbb{R}^k$ is a neural network that decomposes into $\ell$ layers $F_i: \mathbb{R}^{n_i} \rightarrow \mathbb{R}^{n_{i+1}}$ with $n_0 = n$ and $n_{\ell} = k$.
We write $F_{\leq i} := F_i \circ \dots \circ F_1: \mathbb{R}^n \rightarrow \mathbb{R}^{n_{i+1}}$ to denote the function that consists of the first $i$ layers of $F$. While there is in general no way to use $F_{\leq i}$ to push forward a vector bundle $V_{\mathcal{F}}$ and hence a $k$-frame, if $F_{\leq i}$ is smooth, then its differential is a linear map  $dF_{\leq i}: T_xM \to T_{F_{\leq i}(x)}\mathbb{R}^{n_{i+1}}$.
This setup allows us to define neural frames.

\begin{definition}
Let $v_1, \dots, v_k$ be a $k$-frame on $m$-dimensional manifold $M$. Let $F$ be a neural network as above. Then $dF_{\leq i}(v_j(x))$ is a tangent vector in $T_{F_{\leq i}(x)}\mathbb{R}^{n_{i+1}}$ and the {\emph{neural $k$-frame at layer $i$ of $F$ and point $x$}} is the set of vectors $dF_{\leq i}(v_1(x)), \dots, dF_{\leq i}(v_k(x))$.
\end{definition}

\textbf{Informally, a neural $k$-frame is simply the object we get when we push a true $k$-frame through the first $i$-layers of a model. }

Even when we have $F_{\leq i}$ and $v_j(x)$, how do we actually compute $dF_{\leq i}(v_j(x))$? This is fortunately simpler than it perhaps looks. Assuming that $v_j(x)$ is derived from a function $f_j: (-1,1)\times M \to M$ as above, fixing \(x \in M \) and letting \(t \in (-1,1)\) vary, the composition \(F_{\leq i} (f(t, x))\) is a smooth path in $\mathbb{R}^{n_{i+1}}$ such that \(F_{\leq i} (f(0, x)) = F_{\leq i}(x)\), and
\begin{equation}
\label{eq:chain}
dF_{\leq i}(v_j(x)) = \frac{\partial F_{\leq i} (f(t, x))}{\partial t}\Big|_{t=0}.
\end{equation}
In practice, we compute $v_i(x) = \frac{\partial}{\partial t}f_i(t,x)|_{t=0}$ by numerically approximating the partial derivative and we compute $dF_{\leq i}(v_j(x))$ by approximating the derivative on the right hand side of \cref{eq:chain}.

Once we have a neural frame, $dF_{\leq i}(v_1(x)), \dots, dF_{\leq i}(v_k(x))$, we can extract a range of statistics that help diagnose what $F_{\leq j}$ is doing to the data manifold at point $x$. 
An initial idea may be to look for changes in rank of the matrices $A_{x,i}$ with columns $dF_{\leq i}(v_1(x)), \dots, dF_{\leq i}(v_k(x))$, as $i$ varies. {\hjk{A range of works have suggested that at some level, the success of deep learning arises from its ability to compress data into representations where a task can effectively solved (e.g., \cite{tishby2015deep, alain2016understanding}). Changes in rank of a frame should, in-principle, capture such compression of the directions of variation represented by the frame, but on a very local level not investigated in other works}}
Unfortunately, rank is sensitive to noise and is hence unsuitable for this application.\footnote{By ``rank is sensitive to noise" we mean the following: given vectors \(v_1, \dots, v_k \in \RR^n \) and e.g. standard normal random variables \(Z_1,\dots, Z_n \), and any \(\epsilon >0\),  \(\mathrm{rank}(v_1+\epsilon Z_1,\dots, v_k + \epsilon Z_k) = \min(k, n)\) with probability 1 (\emph{regardless} of  \(\mathrm{rank}(v_1,\dots,v_k)\)). On the other hand one can check that \(r(v_1+\epsilon Z_1,\dots, v_k + \epsilon Z_k) \to r(v_1,\dots,v_k)\) as \(\epsilon \to 0 \).} An appealing alternative is \emph{stable rank} \cite{rudelson2007sampling}, which is the ratio between squared Frobenius norm and the squared spectral norm of a matrix. For $A_{x,i}$ this is
\begin{equation*}
    r(A) := \frac{||A_{x,i}||_{\mathrm{frob}}^2}{||A_{x,i}||_{\mathrm{spec}}^2}.
\end{equation*}
It can easily be calculated by computing (in terms of the sorted singular values $\sigma_1 \geq \sigma_2 \geq \dots \geq \sigma_k$ of $A_{x,i}$) \(r(A) = \frac{1}{\sigma_1^2} \sum_i \sigma_i^2\).
Since $k$ is small in practice ($< 50$), computing stable rank via a singular value decomposition of $A_{x,i}$ is quick. We note that stable rank has found a number of useful applications to deep learning \cite{sanyal2019stable}.

Stable rank is a lower bound on rank, and we can interpret a decrease in stable rank when moving from $v_1(x),\dots,v_k(x)$ to $dF_{\leq i}(v_1(x)), \dots, dF_{\leq i}(v_1(x))$ to mean that $F_{\leq j}$ compresses the data manifold $M$ in the directions captured by $v_1(x),\dots,v_k(x)$. This of course does not mean that $F_{\leq j}$ compresses all of $M$ at $x$ since in general $v_1(x),\dots,v_k(x)$ will only span a subspace of the tangent space of $M$ at $x$. Nevertheless we will see that some of results we obtain using stable rank reflect patterns seen in past intrinsic dimension experiments. It is worth mentioning that while true linear algebraic rank has the property that the rank of a product \(AB\) is at most the minimum of the ranks of \(A\) and \(B\), this fails for stable rank. However, the extent of this failure is controlled by the behavior of the top singular values of \(A, B\) and \(AB\) --- precisely:
\begin{lemma}
\label{lem:stable-rank-noninc}
If \(A \) and \(B\) are \(l \times m \) and \(m \times n\) matrices respectively, then 
\begin{equation}
\label{eq:stable-rank-noninc}
    r(AB) \leq \Big(\frac{\lVert A \rVert_{\mathrm{spec}} \lVert B \rVert_{\mathrm{spec}}}{\lVert AB \rVert_{\mathrm{spec}}}\Big)^2 \min \{r(A), r(B) \}.
\end{equation}
\end{lemma}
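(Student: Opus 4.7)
The plan is to reduce the claimed inequality to the familiar submultiplicativity bounds relating the Frobenius and spectral norms, namely
\begin{equation*}
\lVert AB \rVert_{\mathrm{frob}} \leq \lVert A \rVert_{\mathrm{spec}} \lVert B \rVert_{\mathrm{frob}}
\qquad \text{and} \qquad
\lVert AB \rVert_{\mathrm{frob}} \leq \lVert A \rVert_{\mathrm{frob}} \lVert B \rVert_{\mathrm{spec}}.
\end{equation*}
Unwinding the definition of stable rank, the desired inequality \eqref{eq:stable-rank-noninc} is equivalent, after multiplying through by \(\lVert AB \rVert_{\mathrm{spec}}^2\), to
\begin{equation*}
\lVert AB \rVert_{\mathrm{frob}}^2 \leq \min \bigl\{ \lVert A \rVert_{\mathrm{spec}}^2 \lVert B \rVert_{\mathrm{frob}}^2,\ \lVert A \rVert_{\mathrm{frob}}^2 \lVert B \rVert_{\mathrm{spec}}^2 \bigr\},
\end{equation*}
so it suffices to establish the two bounds above.

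First I would prove the left bound by writing \(\lVert AB \rVert_{\mathrm{frob}}^2 = \sum_j \lVert A B e_j \rVert_2^2\), applying the operator-norm inequality \(\lVert A v \rVert_2 \leq \lVert A \rVert_{\mathrm{spec}} \lVert v \rVert_2\) column by column with \(v = B e_j\), and recognizing the resulting sum as \(\lVert A \rVert_{\mathrm{spec}}^2 \lVert B \rVert_{\mathrm{frob}}^2\). The right bound then follows immediately by transposition, since \(\lVert \cdot \rVert_{\mathrm{frob}}\) and \(\lVert \cdot \rVert_{\mathrm{spec}}\) are invariant under transpose; applying the first inequality to \(B^\top A^\top\) yields \(\lVert AB \rVert_{\mathrm{frob}} \leq \lVert B \rVert_{\mathrm{spec}} \lVert A \rVert_{\mathrm{frob}}\).

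Finally, I would combine the two bounds into a minimum, divide by \(\lVert AB \rVert_{\mathrm{spec}}^2\), and factor out the prefactor \(\bigl( \lVert A \rVert_{\mathrm{spec}} \lVert B \rVert_{\mathrm{spec}} / \lVert AB \rVert_{\mathrm{spec}} \bigr)^2\) so that what remains inside the minimum is exactly \(\lVert A \rVert_{\mathrm{frob}}^2 / \lVert A \rVert_{\mathrm{spec}}^2 = r(A)\) in one case and \(r(B)\) in the other.

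There is no real obstacle here: the entire content of the lemma is the two submultiplicativity bounds, and the stated form simply repackages them in a way that explicitly measures how stable rank fails to be submultiplicative. The only subtlety worth flagging is that \(\lVert AB \rVert_{\mathrm{spec}}\) might vanish, in which case \(AB = 0\) and both sides of \eqref{eq:stable-rank-noninc} should be interpreted as \(0\) (with \(r(0) := 0\)); this degenerate case can be handled separately at the outset.
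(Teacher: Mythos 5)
Your proof is correct, and it takes a genuinely different and more elementary route than the paper. The paper first reduces to the case of square matrices by padding $A$ and $B$ with zero blocks, then invokes a singular-value weak submajorization theorem (Theorem IV.2.5 in Bhatia's \emph{Matrix Analysis}), namely $s(AB)^2 <_w s(A)^2 s(B)^2$, and extracts from it the estimate $\sum_i s(AB)_i^2 \leq s(A)_1^2 \sum_i s(B)_i^2$ via a H\"older-type bound. That intermediate estimate is, in norm language, exactly your inequality $\lVert AB \rVert_{\mathrm{frob}}^2 \leq \lVert A \rVert_{\mathrm{spec}}^2 \lVert B \rVert_{\mathrm{frob}}^2$ --- but you derive it directly by writing $\lVert AB \rVert_{\mathrm{frob}}^2 = \sum_j \lVert ABe_j \rVert_2^2$ and applying the operator-norm bound to each column, with no appeal to majorization and no need to pad to square matrices. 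Both arguments then obtain the $r(A)$ case by the symmetric transposition trick. In short, your proof exposes that the majorization machinery in the paper's argument is overkill for this lemma: the only fact that is actually used from it is the mixed submultiplicativity of the Frobenius and spectral norms, which has a two-line elementary proof. A further small advantage of your version is that you explicitly flag the degenerate case $AB = 0$ (where $\lVert AB \rVert_{\mathrm{spec}}$ vanishes and the right-hand side is formally undefined), which the paper's proof does not address.
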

We include Lemma \ref{lem:stable-rank-noninc} since when taking \(A = dF_{i+1} \) and \(B = dF_{\leq i} \), so that \(AB = dF_{\leq i+1}\), Lemma \ref{lem:stable-rank-noninc} seems somewhat explanatory of the general decreasing trend in the curves of \cref{fig-stable-rank-robust-resnet50-aug,fig-stable-rank-augmentation,fig-stable-rank-different-frames-resnet50,fig-ID-vs-SR-vit}. \footnote{Note that by the multiplicative property of the spectral norm, the first factor on the right hand side of \cref{eq:stable-rank-noninc} is always \(\geq 1 \).} 

We end this section by discussing two different types of frames which seem particularly interesting. Methods of finding optimal frames are likely to depend on the information that one is interested in extracting from a model and are an interesting direction of research. \\

\vspace{1mm}
\noindent\textbf{Augmentation frames:} This neural frame is generated by image augmentations $f_i: (a,c) \times M \rightarrow M$ with the following properties: (1) as implied by the domain and range of $f_i$ above, $f_i$ transforms one natural image (on $M$) to another natural image (on $M$). While this latter image was not actually captured by a camera (instead being produced in software) it should be plausible that it could have been. (2) Aside from the input image, $f_i$ is also controlled by a parameter $t \in (a,c)$ for $a < c \in \mathbb{R}$ such that for some $b \in (a,c)$, $f_i(b,x) = x$ for all $x \in M$. For example, if $f_{\text{rot}}: (-180,180) \times M \rightarrow M$ is image rotation, with the first parameter measuring the number of degrees that an image will be rotated, then it is always the case that $f_{\text{rot}}(0,x) = x$. Note that such functions can always be reparametrized to fit the form in \cref{lem:flows} and \cref{cor-bundle-from-functions}.

The frame $v_1,\dots,v_k$ derived from $f_1, \dots, f_k$ describes a number of pseudo-naturalistic directions in which an image can vary without leaving the image manifold. The neural frame associated with this frame tells us how a model handles change in these directions locally. In \cref{tab:augmentation-transformations}, we list the image augmentations that we used, the library we used to implement them, and the augmentation parameters that were used in our experiments.

\begin{table*}[thb]
\caption{Augmentation transformations and the library and parameters used to implement them.}
\label{tab:augmentation-transformations}
\begin{center}
\begin{tabular}{rcl}
\toprule
\textbf{Augmentation} & \textbf{Library} & \textbf{Default parameters}
 \\ \midrule
JPEG transform & imgaug library \cite{imgaug} & Compression $70\%$ \\
Brightness &   Torchvision & Brightness factor $1.02$   \\
Crop and resize & Torchvision & Resize $\times 4$, crop off $1$ pixel,\\
& & Interpolation: bilinear, nearest, linear\\
Contrast &  Torchvison  & Contrast factor $1.05$ \\
Gamma transform & Torchvision & Gamma $1.02$ \\
Hue & Torchvision & Hue factor $.01$\\
Saturation & Torchvision & Saturation factor $1.1$\\
Sharpness & Torchvision & Sharpness factor $1.2$\\
Downscale & Torchvsion & Resize $\times 0.9$, return to original\\
& & Interpolation: bilinear, nearest, linear\\
Rotation + translation & Torchvision & Resize $\times 4$, rotate $2$ degrees, \\
& & centers $(0,0),(50,50),(-50,50)$\\
Gaussian blur & Torchvision & Kernel size $3 \times 3$, $\sigma = 2.0$\\
Log correction & Kornia \cite{eriba2019kornia} & Gain $1.05$ \\
Sigmoid transform & Kornia & Cutoff $0.5$, gain $5$ \\
\end{tabular}
\end{center}
\end{table*}

Some image augmentations come with more than a single real parameter that a user can choose from. For example, when rotating an image, one can often pick the pixel coordinates of the point which will be the folcrum of the rotation (for example, in \cite{marcel2010torchvision}). How many versions of the augmentation should one add to the augmentation frame in such cases? In a $224 \times 224$ image we have $50176$ pixels that we could rotate around. How many can be added before the corresponding tangent vectors become linearly independent? In cases where the underlying augmentation corresponds to the action of a Lie group (including this case, where the Lie group is the special Euclidean group $SE(2)$), Lie theory can provide an answer. We begin by recalling that the action of a Lie group $G$ on a manifold $M$ induces a linear map from the Lie algebra $\mathfrak{g}$ to $T_xM$ for any $x \in M$.

\begin{proposition} {(Theorem 20.15 \cite{lee2013smooth})}
\label{prop-lie-algebra}
Let \(\mathfrak{g} = T_0 G\) be the Lie algebra of \(G\) and $\rho: G \times M \rightarrow M$ a Lie group action of $G$ on $M$. Suppose \(x \in M \), and define \(\mathrm{ev}_x: G \to M \) as \(\mathrm{ev}_x(g) = \rho(g, x) \). Then $\mathrm{ev}_x$ is a smooth map and the differential of \(\mathrm{ev}_x \) at the identity element \(e \in G\) is a linear map \(d \mathrm{ev}_x: \mathfrak{g} \to T_x M\).
\end{proposition}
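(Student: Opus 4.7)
The plan is to verify the two assertions in Proposition \ref{prop-lie-algebra} in sequence: first that $\mathrm{ev}_x$ is smooth, and then that its differential at $e$ has the claimed domain and codomain. Both follow from unpacking the definition of a Lie group action together with the basic categorical properties of the differential functor.

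First I would factor the map $\mathrm{ev}_x$ through the product. Define $\iota_x: G \to G \times M$ by $\iota_x(g) = (g, x)$. This is the product of the identity map $\mathrm{id}_G$ with the constant map $G \to M$ sending everything to $x$, both of which are smooth, so $\iota_x$ is smooth. By the definition of a Lie group action, $\rho: G \times M \to M$ is smooth. Since $\mathrm{ev}_x = \rho \circ \iota_x$ and compositions of smooth maps are smooth, it follows that $\mathrm{ev}_x$ is smooth.

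Next I would pass to the differential. For any smooth map $\phi: N \to P$ between smooth manifolds and any $p \in N$, the differential $d\phi_p: T_pN \to T_{\phi(p)}P$ is by construction a linear map between tangent spaces. Applying this with $\phi = \mathrm{ev}_x$ and $p = e$ yields a linear map
\begin{equation*}
  d(\mathrm{ev}_x)_e: T_eG \longrightarrow T_{\mathrm{ev}_x(e)}M.
\end{equation*}
Since $\rho$ is a Lie group action, $\rho(e, x) = x$, so $\mathrm{ev}_x(e) = x$ and the codomain is $T_xM$. Finally, identifying the Lie algebra $\mathfrak{g}$ with $T_eG$ (the tangent space at the identity, which is what the notation $T_0 G$ in the statement is meant to denote) produces the claimed linear map $d\,\mathrm{ev}_x: \mathfrak{g} \to T_xM$.

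Since this is an almost immediate consequence of the definitions, there is no real obstacle; the only non-trivial bookkeeping is the factorization $\mathrm{ev}_x = \rho \circ \iota_x$, which reduces smoothness of a partially-evaluated action to smoothness of the action itself. Everything else is a direct application of the standard properties of the differential, and the full argument is exactly the proof given in \cite[Thm.~20.15]{lee2013smooth}.
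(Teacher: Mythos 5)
Your proof is correct. Note, however, that the paper does not include its own proof of this proposition---it is cited directly as Theorem~20.15 of \cite{lee2013smooth}---and your argument (factoring $\mathrm{ev}_x = \rho \circ \iota_x$ with $\iota_x(g) = (g,x)$, invoking smoothness of $\rho$ and $\iota_x$, applying the linearity of the differential of a smooth map, and using $\rho(e,x)=x$ to identify the codomain as $T_xM$) is precisely the standard argument given in that reference.
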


Given Proposition \ref{prop-lie-algebra}, our problem is equivalent to identifying the dimension of the image of \(d \mathrm{ev}_x\) 
This will tell us the maximum number of linearly independent tangent vectors that can be generated by the action of $G$. 
To state the solution, we require a piece of terminology: the \emph{stabilizer} of a point \(x \in M\) is the subgroup \( G_x = \{g \in G \, | \, gx =x \}\).
\begin{proposition}
\label{prop:orbit-stab}
The natural map \(\mathfrak{g} \to T_x M\) is injective if and only if the stabilizer \(G_x \) is discrete.
\end{proposition}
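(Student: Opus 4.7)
The plan is to identify $\ker(d\,\mathrm{ev}_x)$ with the Lie algebra of the stabilizer $G_x$, so that injectivity of the natural map becomes equivalent to this Lie algebra being trivial, equivalently to $G_x$ having dimension zero, equivalently to $G_x$ being discrete.

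First I would note that $G_x = \mathrm{ev}_x^{-1}(\{x\})$ is closed in $G$ (the continuous preimage of a point), so by Cartan's closed subgroup theorem it is an embedded Lie subgroup with Lie algebra $\mathfrak{g}_x := T_e G_x \subseteq \mathfrak{g}$. The inclusion $\mathfrak{g}_x \subseteq \ker(d\,\mathrm{ev}_x)$ is immediate: for $v \in \mathfrak{g}_x$ the one-parameter subgroup $\gamma(t) = \exp(tv)$ lies in $G_x$, so $\mathrm{ev}_x \circ \gamma$ is constantly equal to $x$, and differentiating at $t=0$ yields $d\,\mathrm{ev}_x(v) = 0$.

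For the reverse inclusion I would invoke the orbit-stabilizer theorem for smooth Lie group actions (Lee, Theorem 21.18), which asserts that the orbit map factors as a composition $G \to G/G_x \to M$ in which the quotient map is a smooth submersion whose differential at the identity has kernel exactly $\mathfrak{g}_x$, and the second map is an injective immersion identifying $G/G_x$ with the orbit $G \cdot x$. A chain-rule computation then gives $\dim(\mathrm{image}\, d\,\mathrm{ev}_x) = \dim G - \dim G_x$, whence $\dim \ker(d\,\mathrm{ev}_x) = \dim G_x$. Combined with the forward inclusion this forces $\ker(d\,\mathrm{ev}_x) = \mathfrak{g}_x$.

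To close, $d\,\mathrm{ev}_x$ is injective iff $\mathfrak{g}_x = 0$ iff $\dim G_x = 0$ iff $G_x$ is discrete (a Lie group is zero-dimensional precisely when it carries the discrete topology). The main obstacle is the dimension count underpinning the reverse inclusion, which genuinely relies on the orbit-stabilizer theorem for Lie group actions; once that is in hand, the remaining steps are essentially bookkeeping with standard Lie-theoretic identities.
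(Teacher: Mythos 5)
Your proof is correct and follows essentially the same route as the paper: both arguments hinge on identifying $\ker(d\,\mathrm{ev}_x)$ with the Lie algebra of the stabilizer $G_x$, then translating injectivity into triviality of that Lie algebra and hence discreteness of $G_x$. The only difference is that the paper cites this identification directly (Theorem 3.26 of Kirillov), whereas you re-derive it from Cartan's closed subgroup theorem together with the orbit--stabilizer theorem for Lie group actions.
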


In our rotation example 
the stabilizer $SE(2)_x$ of a natural image \(x\) almost always consists of the identity alone, hence is in particular discrete.
Since the dimension of Lie group $SE(2)$ is 3 and a Lie algebra's dimension (as a vector space) is equal to the manifold dimension of its corresponding Lie group, the subspace of $T_xM$ spanned by tangent vectors generated by all possible rotations at different points in a image is 3. Thus we conclude that for most images we only need to include tangent vector approximations for rotations at 3 points in an image. In our experiments we choose to rotate at pixels $(0,0), (50,50),$ and $(-50,50)$. \cwg{There could be a mention here 
of the sanity test you ran Henry.}
\\
\vspace{.1mm}

\noindent\textbf{Diffusion frames:} In the recent work \cite{luzi2022boomerang}, Luzi et. al. describe how diffusion models can be used to sample locally around an image. In essence, the method they describe, called Boomerang, adds a user chosen amount of noise to an image (driving it toward the latent space of the model) and then uses the diffusion model to bring it back to the space of natural images. In this process the image will be subtly altered in a naturalistic way. This method fits nicely within our scheme of neural frames: we use Boomerang to produce $k$ distinct perturbations of an image (to match our augmentation frame, in our experiments $k = 19$), then we define a $k$-frame with these. We assume that the perturbations generated by the diffusion model are small enough so that the linear path from a perturbed image to the real image lies on the image manifold.

\section{Experiments}
\label{sect-experiments}

Having developed neural frames, we show their utility by using them to probe the local behavior of deep learning models. 
We give full experimental details in Section \ref{appendix-experimental-details} in the supplementary materials. 
Unless noted otherwise, we use publicly available weights from torchvision \cite{marcel2010torchvision} or timm \cite{rw2019timm}. 
To simplify diagrams, we omit layer names providing their numerical correspondence in Tables \ref{table-vit-layers}-\ref{table-alexnet-layers} in the supplementary material. 

We utilize four different types of frames in our experiments which we describe here. (1) \emph{Gaussian noise:} We perturb an image with random Gaussian noise with mean and variance which we normalize to match the statistics of vectors in our augmentation frame. Note this is not a frame on the image manifold itself. (2) \emph{Augmentation frame:} We use the augmentations listed in Table \ref{tab:augmentation-transformations} to generate an augmentation frame (example images of this frame are found in Figure \ref{fig-example-augmentations} in the supplementary material). 
(3) \emph{Random rotation of augmentation frame:} We randomly rotate the augmentation frame above so it retains its geometric structure but loses its semantic meaning.
(4) \emph{Perturbations via stable diffusion:} We use the Boomerang method \cite{luzi2022boomerang} to generate samples from around an ImageNet image and take these samples as perturbations to build a frame (example images of this frame are found in Figure \ref{fig-example-sd}).


\begin{figure}[h]
\begin{center}
\includegraphics[width=.6\linewidth]{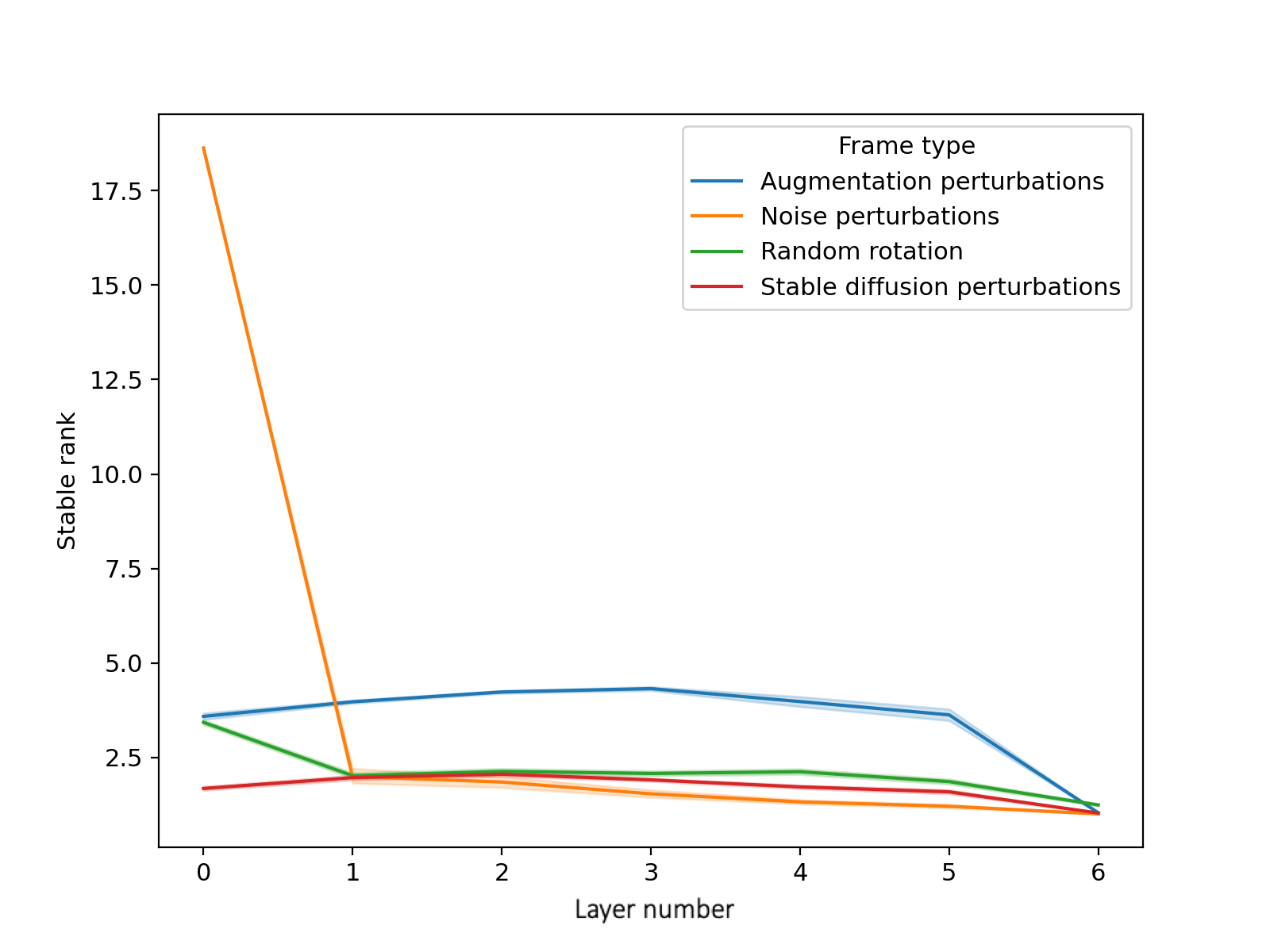}
\end{center}
\caption{The stable rank of different types of frames measured at various layers of a ResNet50 model. Layer zero corresponds to model input and Layer 6 corresponds to model output. Shaded regions indicate $95\%$ confidence intervals over $40$ randomly selected ImageNet images. \label{fig-stable-rank-different-frames-resnet50}}
\end{figure}

\noindent\textbf{In general, models preserve on-manifold tangent vectors and collapse vectors that point off manifold:} It is reasonable to ask whether a CNN or transformer actually ``sees'' different frames differently. One might worry that on the small scales that we work, the different frames we construct do not capture meaningful differences in model representations. For example, can a model actually tell the difference between an augmentation frame and a noise frame? 

To test this, we plot the stable rank for a number of different frames at different layers of a ResNet50 pretrained on ImageNet \cite{marcel2010torchvision}. The results are shown in Figure \ref{fig-stable-rank-different-frames-resnet50} (Figure \ref{fig-stable-rank-different-frames-vit} in the supplementary material for the same experiment with a ViT). We can see that even with the coarse statistic of stable rank, different neural frames exhibit distinct behavior when processed by the models. The neural frame generated from Gaussian noise predictably has the highest stable rank in the ambient space (layer 0), but this drops quickly in both models as the frame is processed. This suggests that as we might intuitively expect, models preserve those directions more representative of natural variation at the expense of random directions. In contrast, augmentation frames, which simulate directions of natural variation of imagery are generally more preserved from layer to layer, only being collapsed in the final classification layer. 

One might wonder if this phenomenon is a consequence not of the directions that augmentation frames point relative to noise frame, but of other structural features of the frame itself. For example, inspection of the stable rank of the input frames in Layer 0 of Figure \ref{fig-stable-rank-different-frames-resnet50} (prior to processing by the model) show that the noise frame is close to being an orthogonal set of vectors while the augmentation frame has significant linear redundancies. We test this by randomly rotating the augmentation frame so it no longer points in the direction of natural changes to the image but keeps other structural features. When we do this we see that this rotated frame, like the noise frame, is collapsed by the model. This provides strong evidence that, even at the very smallest scales, models recognize and preserve directions that simulate natural variation found in imagery.


\begin{figure}[h]
\begin{center}
\includegraphics[width=.6\linewidth]{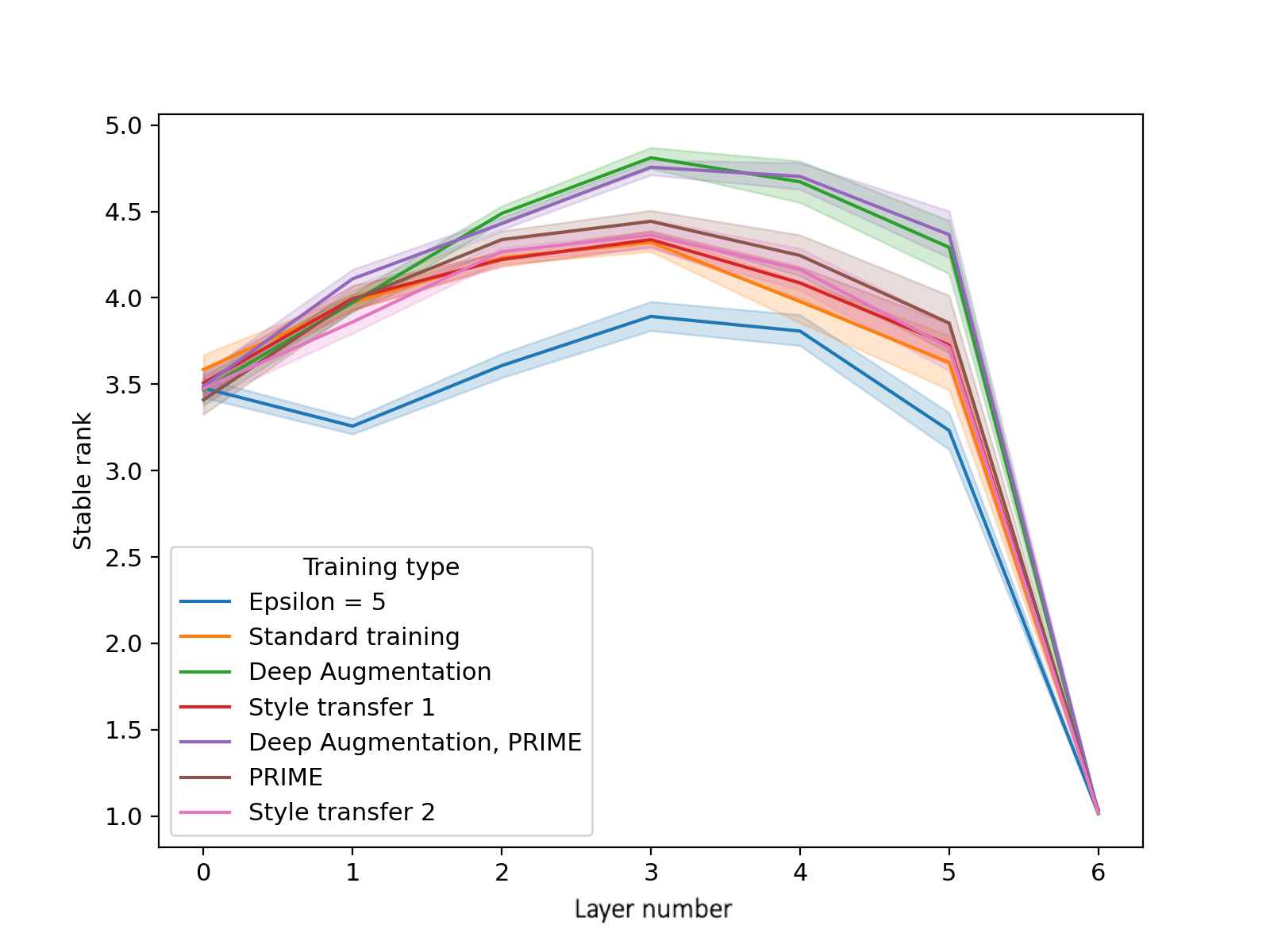}
\end{center}
\caption{The stable rank (by layer) of ResNet50 models evaluated with respect to augmentation frames on ImageNet images where each model was trained with a different augmentation method. Layer zero corresponds to model input and the last layer corresponds to model output. Shaded regions indicate $95\%$ confidence intervals over $40$ randomly selected ImageNet images. \label{fig-stable-rank-augmentation}}
\end{figure}

\noindent\textbf{Training with augmentation causes models to better preserve on-manifold augmentation frames:} In order to better understand the impact of training with augmentation at the local level, we explored the stable rank of augmentation frames for a range of models trained with (and without) different types of heavy augmentation. We consider ResNet50 models trained with the augmentation methods PRIME \cite{modas2021prime}, 
Deep Augmentation \cite{hendrycks2021many}, 
and Stylized ImageNet \cite{geirhos2018imagenet}.  Our results are shown in Figure \ref{fig-stable-rank-augmentation} where we see that generally, models trained with extra augmentation have neural frames with higher stable rank. This indicates that these models more faithfully preserve (and to some extent even expand) frames represented by small augmentations. Note that this may seem unexpected given that training with augmentation is generally done to build in invariance natural variation in images. This might lead one to conclude that training with augmentation should cause augmentation frames to collapse as a model consolidates different augmented versions of the same data point. Figure \ref{fig-stable-rank-augmentation} suggest that this must happen only in the final layers of a model and that instead, training with augmentation causes a model to learn more distinct and structured representations of different augmentations of a single input, collapsing these to achieve invariance only at the final layer. This speculation agrees with observations found in \cite{kvinge2022ways}. 


\noindent\textbf{On the other hand, adversarial training degrades the preservation of augmentation frames but improves the preservation of noise frames:}
It has been empirically confirmed via a range of different methods that adversarial training has effects on the way that computer vision models process data at the local level \cite{engstrom2019adversarial}. To investigate whether this can be seen at the level of neural frames, we calculated the stable rank for 5 layers of several different ResNet50 models, each trained with a different $l_2$-robust $\epsilon$ bound of adversarial training with weights from \cite{salman2020adversarially}. 

On the left in Figure \ref{fig-stable-rank-robust-resnet50-aug} we show the stable rank over $40$ random ImageNet images with respect to the augmentation frame and models with various strengths of adversarial training (here $\epsilon$ gives the $\ell_2$ bound on adversarial examples shown to the model during training). We observe that the stable rank of our augmentation frames generally decreases slightly as the strength of adversarial training increases. Furthermore, these differences are most pronounced at earlier layers of the model. On the other hand, we can see that when we substitute the augmentation neural frame for the off-manifold noise neural frame (right, Figure \ref{fig-stable-rank-robust-resnet50-aug}) that the opposite pattern holds and stable rank generally increases as the strength of adversarial training increases. 

\begin{figure*}[h]
\begin{center}
\includegraphics[width=.49\linewidth]{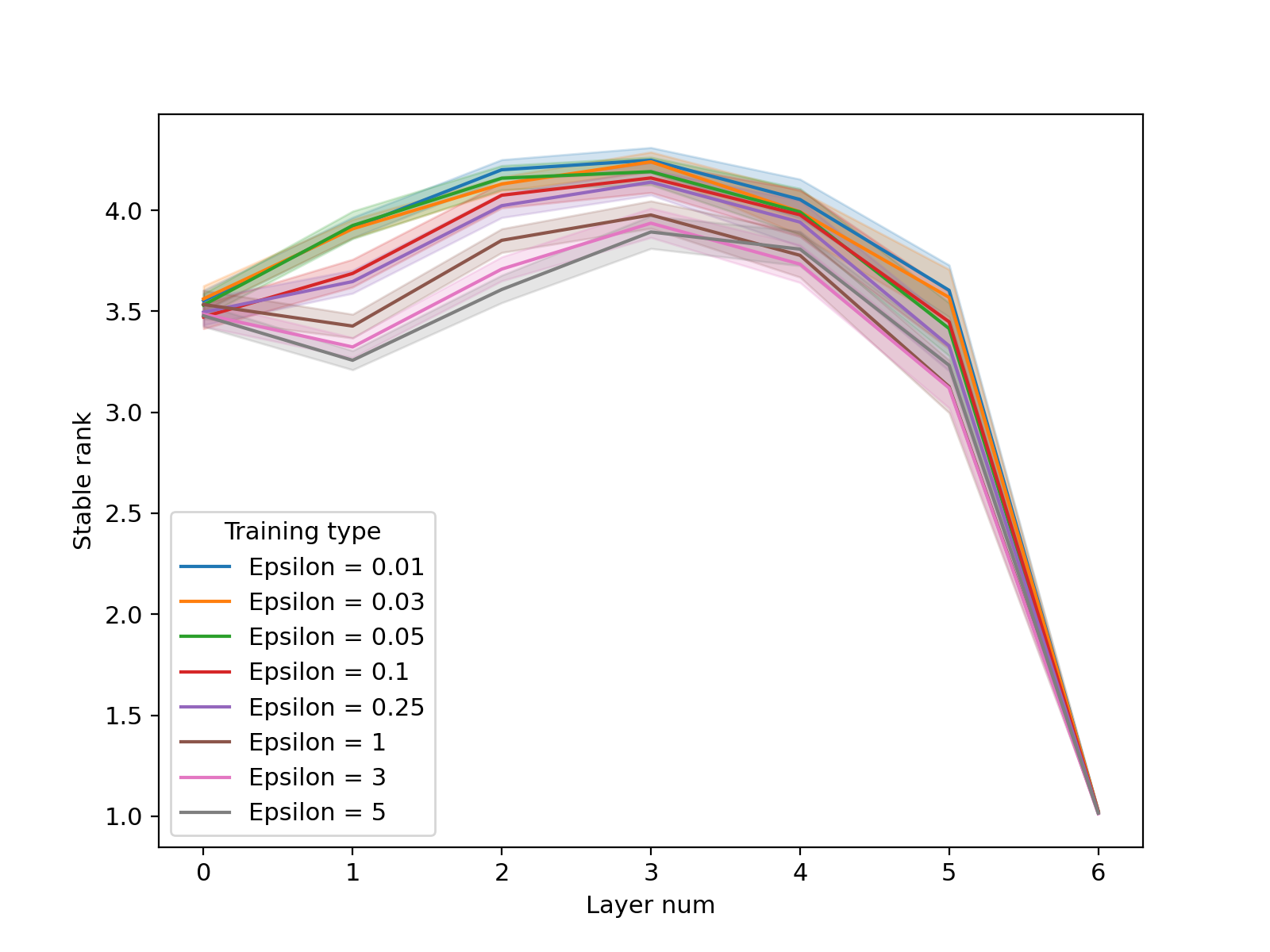}
\includegraphics[width=.49\linewidth]{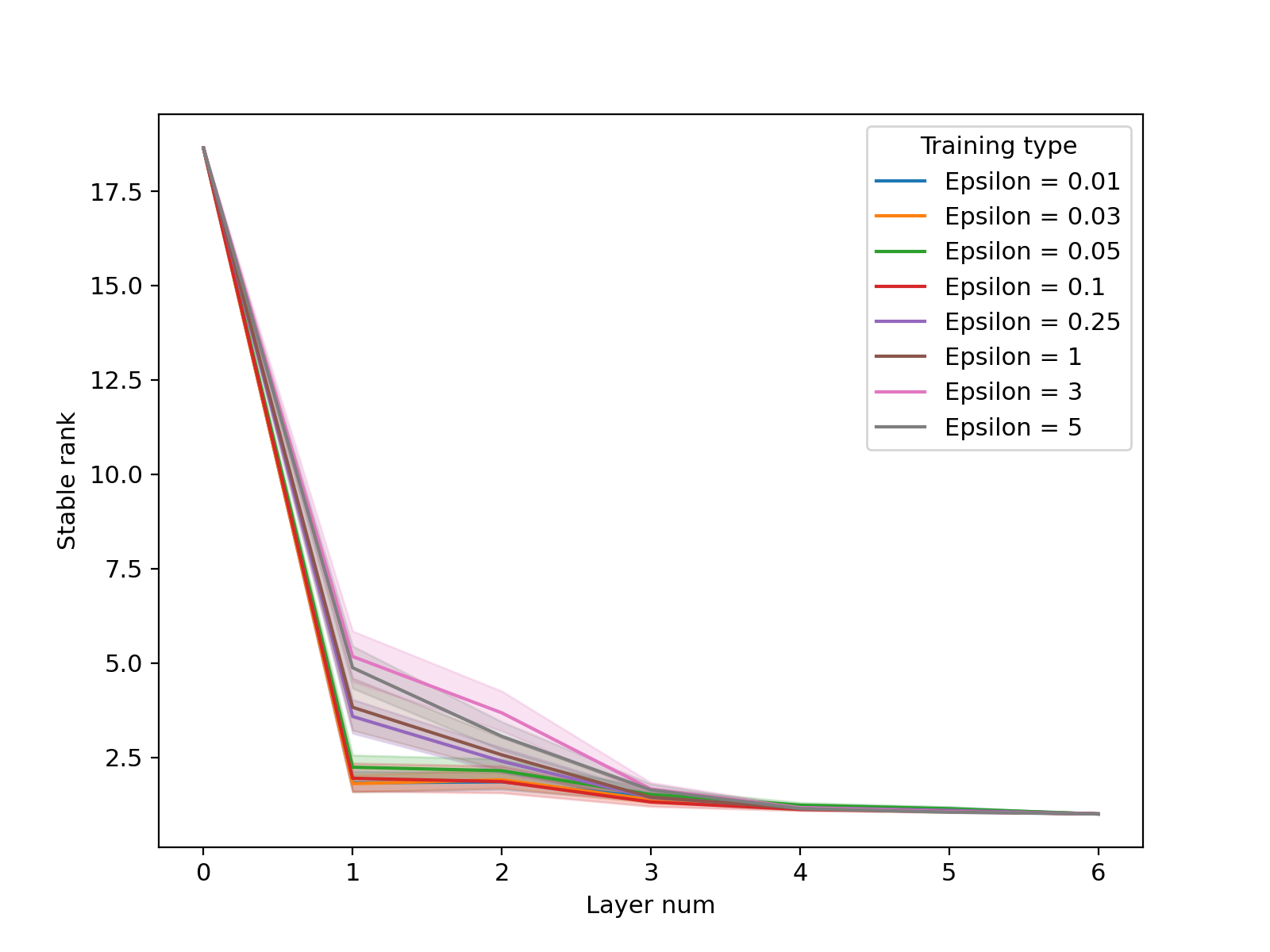}
\end{center}
\caption{{\textbf{(Left)}} The stable rank (by layer) of adversarially trained $l_2$-robust ResNet50 models with varying $\epsilon$ values evaluated with respect to augmentation frames on ImageNet images. Layer zero corresponds to model input and the last layer corresponds to model output. {\textbf{(Right)}} The same models evaluated on noise frames. Shaded regions indicate $95\%$ confidence intervals over $40$ randomly selected ImageNet images. \label{fig-stable-rank-robust-resnet50-aug}}
\end{figure*}

\noindent \textbf{The stable of rank augmentation frames is correlated with model accuracy:} It is natural to ask whether statistics associated with neural frames have any relationship with other characteristic of a model. In Figure \ref{fig-stable-rank-vs-acc}, we show that generally, higher stable rank of augmentation frames is correlated with model accuracy. This observation fits well with our speculation above that preservation of augmentation frames (as measured by stable rank) may be tied to a model's fit to the underlying image manifold.

\begin{figure}
\begin{center}
\includegraphics[width=.7\linewidth]{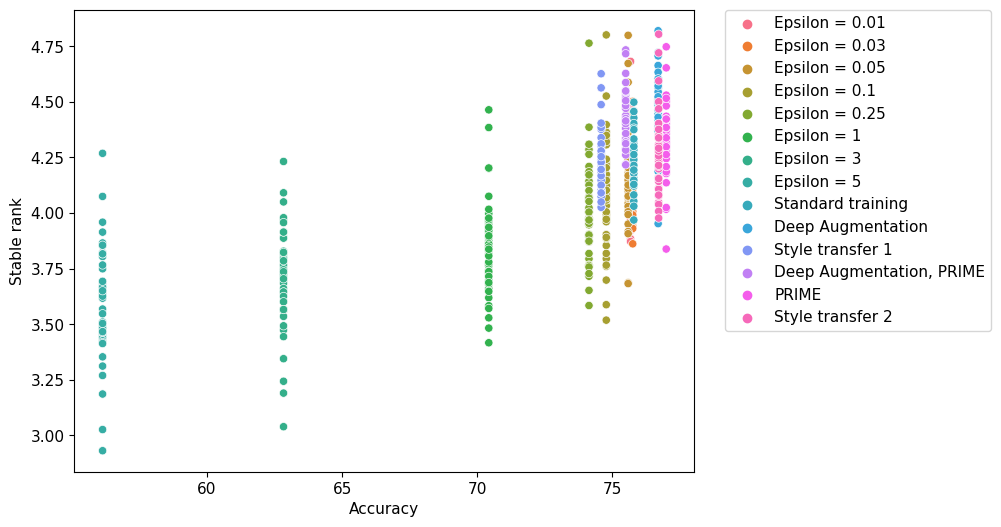}
\end{center}
\caption{The stable rank of different types of frames measured at various layers of a ResNet50 model. Layer zero corresponds to model input and Layer 6 corresponds to model output. Shaded regions indicate $95\%$ confidence intervals over $40$ randomly selected ImageNet images. \label{fig-stable-rank-vs-acc}}
\end{figure}


We end with a couple final observations that we explore more thoroughly in the supplementary material: (i) Neural frames reveal that over the course of training, models initially locally compress the image manifold and then gradually expand it as training progresses (see Figure \ref{fig-stable-training} in the supplementary material). It would be interesting to tie this to previously observed phenomena e.g., the information bottleneck \cite{tishby2015deep}. (ii) The stable rank of augmentation neural frames tend to vary less across CNN models when compared to transformers, suggesting that transformers compress and stretch data manifolds more during processing (see Figures \ref{fig-stable-rank-architectures-CNN}-\ref{fig-stable-rank-architectures-ViT}). (iii) We describe frame CKA, which can reveal inter-layer differences in ViTs that traditional CKA struggles to capture.

\section{Limitations}

While neural frames provide unique and valuable information about how a network processes the tangent bundle of a data manifold, this information is never the full story. For example, in most cases the frames we use span proper subspaces of the tangent space. Thus, there may be changes to the tangent space that we miss because they are orthogonal or nearly orthogonal to all vectors in the frame that we use. Augmentation frames may be challenging to use in specialized scenarios where augmentations that preserve the particular image manifold under consideration are not a priori known. 

\section{Conclusion}

While data manifolds play a central role in our understanding of how and why deep learning works, extracting any tangible information about them is challenging. In this paper we provide a new tool, neural frames, to help probe the ways that deep learning models interact with data manifolds. We show that neural frames capture some of the information that can be obtained by other more data-hungry approaches, but also provide their own set of unique insights. 

\begin{ack}
This research was supported by the Mathematics for Artificial Reasoning in Science (MARS) initiative via the Laboratory Directed Research and Development (LDRD) investments at Pacific Northwest National Laboratory (PNNL). PNNL is a multi-program national laboratory operated for the U.S. Department of Energy (DOE) by Battelle Memorial Institute under Contract No. DE-AC05-76RL0-1830.
\end{ack}

\bibliographystyle{plain}
\bibliography{neurips}

\clearpage
\newpage
\appendix

\section{How does the stable rank of a frame relate to intrinsic dimension?}

Given that the dimension of a manifold can be defined as the vector space dimension of its tangent space, one might ask how the stable rank of a neural frame (which in some cases is also related to the tangent space of a data manifold) relates to the intrinsic dimension of a neural representation. This is especially pertinent given the large number of works that investigate neural representations through the lens of intrinsic dimension \cite{ansuini2019intrinsic,pope2021intrinsic,amsaleg2017vulnerability,ma2018characterizing}. In this short section we compare the stable rank of neural frames to intrinsic dimension to get a better sense of what both are capable of telling us.

\begin{itemize}[leftmargin=*]
\item \textbf{Manifold dimension:} Intrinsic dimension is designed to estimate the dimension 
 of the manifold underlying a dataset. Unless we are using a frame (whose vectors span the entire tangent space of the manifold), the stable rank will not tell us the intrinsic dimension (though it is a lower bound).\\
\item \textbf{Number of points required:} Intrinsic dimension generally requires many real data points to calculate, and this number increases as the intrinsic dimension increases. A broad range of work has tried to provide detailed estimates of the number of points necessary to get a reliable estimate (e.g., \cite{fefferman2016testing}). On the other hand, the stable rank of a frame can be calculated with a single datapoint provided one knows how to perturb that datapoint in order to generate the frame. Because there is variation between the stable rank of frames on different datapoints, we advocate using at least several datapoints and taking an average.\\
\item \textbf{Analyzing different sources of variation:} A variety of works that have investigated the intrinsic dimension of the hidden activations of deep learning models have noted that these models tend to decrease the original dimension of the data manifold as data passes through the model \cite{ansuini2019intrinsic}. One can ask what specific sources of variation are collapsed in this process. Does a drop in intrinsic dimension from one layer to the next represent the fact that the model ignores some structured degree of freedom (e.g., color)? It is not straightforward to measure this with intrinsic dimension alone. On the other hand, since frames can capture specific directions of variation, these kinds of questions become accessible when using this tool.\\
\item \textbf{Local vs very local:} Intrinsic dimension estimators come in a variety of flavors. Some use the entire dataset to estimate dimensionality, while other more recent approaches average over many local neighborhoods. In all these cases, the size of the neighborhood used is constrained by the dataset. If the dataset is sparsely sampled (and hence points are further apart), the neighborhoods used are by necessity larger. On the other hand, since neural frames utilize various tools to perturb a datapoint, the neighborhoods they study can often be made much smaller.\\
\item \textbf{Use of real vs synthetic data:} In most cases intrinsic dimension estimators only use real datapoints from the dataset. On the other hand, the approaches to neural frames that we describe here use augmentations or generative models to create close neighboring points that estimate tangent vectors.
\end{itemize}

Despite these differences, we find that in certain cases at least, intrinsic dimension and the stable rank of neural frames appear to capture similar patterns in neural representations. Figure \ref{fig-ID-vs-SR-vit} shows the intrinsic dimension of 5,000 ImageNet images at different layers of a vision transformer (left vertical axis) as captured by intrinsic dimension estimators MLE \cite{mle} and TwoNN \cite{facco2017estimating}, vs the stable rank of an augmentation frame (right vertical axis). We see that while these statistics differ numerically, their curves have similar shapes. 

\begin{figure}[h]
\begin{center}
\includegraphics[width=.75\linewidth]{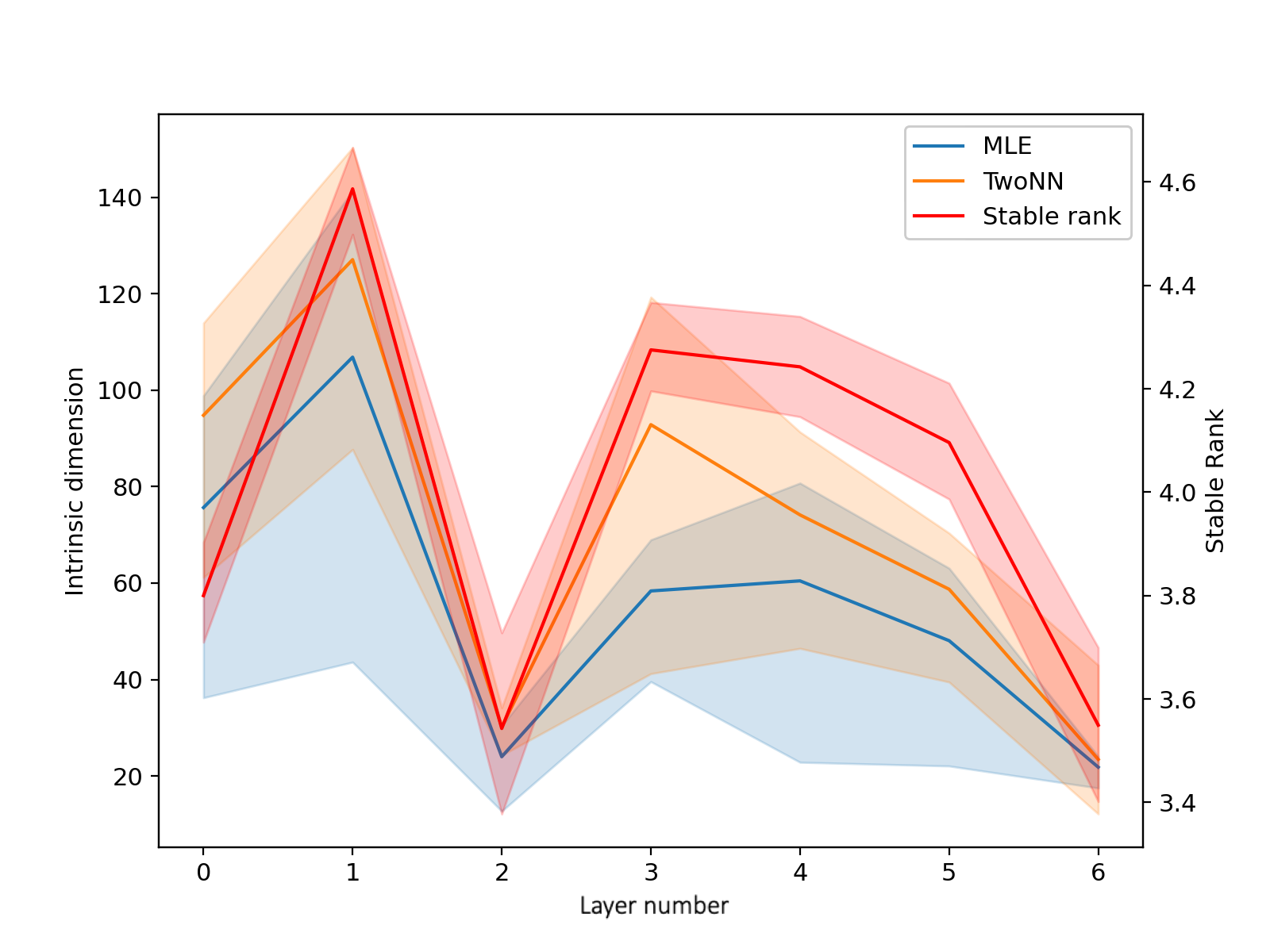}
\end{center}
\caption{An estimation of the intrinsic dimension of the hidden activations of $5,000$ ImageNet images using MLE and TwoNN within a ViT model. We include the stable rank for comparison. Input and output layers are omitted, so $0$ corresponds to the first hidden representation. Shaded regions indicate $95\%$ confidence intervals over $40$ randomly selected ImageNet images for stable rank and three random samplings of $5,000$ ImageNet images for MLE and TwoNN. \label{fig-ID-vs-SR-vit}}
\end{figure}

\subsection{Frame CKA}
\label{sect-frame-CKA}

Let $D = \{x_1,\dots,x_d\}$ be a dataset in $\mathbb{R}^n$ and $F^1,F^2: \mathbb{R}^n \rightarrow \mathbb{R}^k$ two neural networks. Let $F^1_{\leq i_1}(D)$ be the matrix whose rows are $F^1_{\leq i_1}(x_1), \dots, F^1_{\leq i_1}(x_d)$ with analogous notation for $F^2_{\leq i_2}(D)$. Using the notation from Section \ref{sect-neural-frames}, the centered kernel alignment (CKA) \cite{kornblith2019similarity} score for models $F^1,F^2$ at layers $i_1$ and $i_2$ respectively in (terms of $D$) is defined as
\begin{equation*}
\text{CKA}(F^1_{\leq i_1}(D),F^2_{\leq i_2}(D)) = \frac{||\text{Cov}\big(F^1_{\leq i_1}(D),F^2_{\leq i_2}(D)\big)||_F^2}{||\text{Cov}\big(F^1_{\leq i_1}(D),F^1_{\leq i_1}(D)\big)||_F||\text{Cov}\big(F^2_{\leq i_2}(D),F^2_{\leq i_2}(D)\big)||_F}
\end{equation*}
where \(\text{Cov}\) denotes covariance and  $||\cdot||_F$ is the Frobenious norm. Very roughly, CKA measures the structural similarity of representations of datapoints $D$ in $F^1$ at layer $i_1$ vs the representation of $D$ in $F^2$ at layer $i_2$ with higher scores (closer 1) indicating representations are that structrally similar. Notably, CKA is invariant to orthogonal transformation, which fits with the intuition that rotating a model's representation does not meaningfully change its structure (see \cite{klabunde2023similarity} for further discussion on this and other invariances in similarity metrics).

The same reasons that CKA is a useful tool for comparing high-dimensional model representations make it appropriate to comparing neural frames. In particular, if $v_1(x), \dots, v_k(x)$ is a $k$-frame at $x \in \mathbb{R}^n$, then we can apply CKA to the matrices whose rows are
\begin{equation*}
dF^1_{\leq i_1}(v_1(x)), \dots, dF^1_{\leq i_1}(v_k(x)) \quad \text{and}  \quad dF^2_{\leq i_1}(v_1(x)), \dots, dF^2_{\leq i_1}(v_k(x))
\end{equation*}
respectively. We call the resulting statistic the \emph{frame CKA score} of $F^1$ and $F^2$ at layers $i_1$ and $i_2$ for frame $v_1(x),\dots,v_k(x)$. \cwg{Should there be derivatives here? If not, it 
could be simpler to say "apply CKA to the dataset \(D = \{v_1(x), \dots, v_k(x)\}\).} Following the standard interpretation of CKA scores, a frame CKA score close to $1$ indicate that $F^1$ and $F^2$ represent frame $v_1(x), \dots, v_k(x)$ similarly at layers of $i_1$ and $i_2$ respectively. Note that unlike standard CKA which compares the representation of a collection of points, frame CKA compares the arrangement of the vectors of a neural frame at a single point.

\section{What Can Frame CKA Tell Us?}

\begin{figure}[h]
\begin{center}
\hspace{8mm}\includegraphics[width=.35\linewidth]{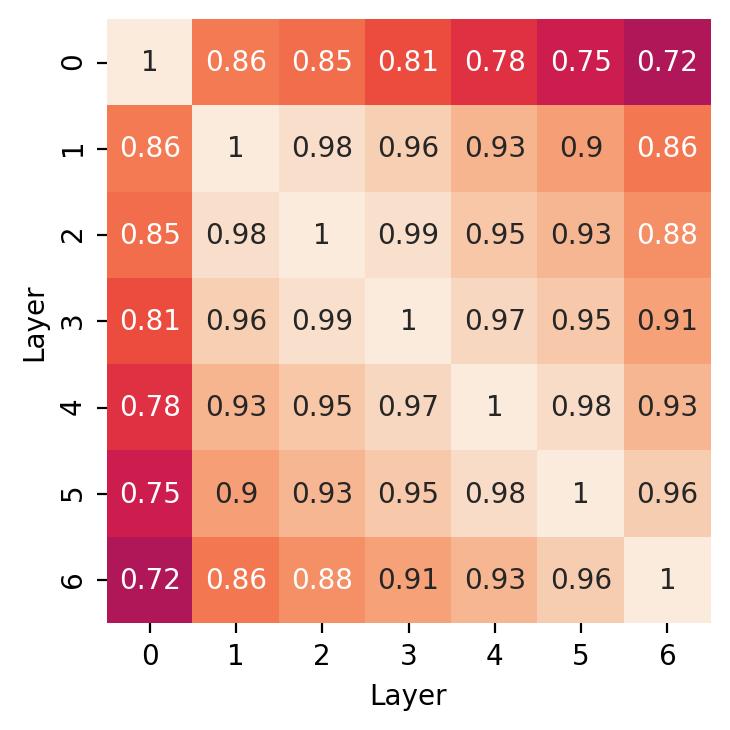}\\
\includegraphics[width=.35\linewidth]{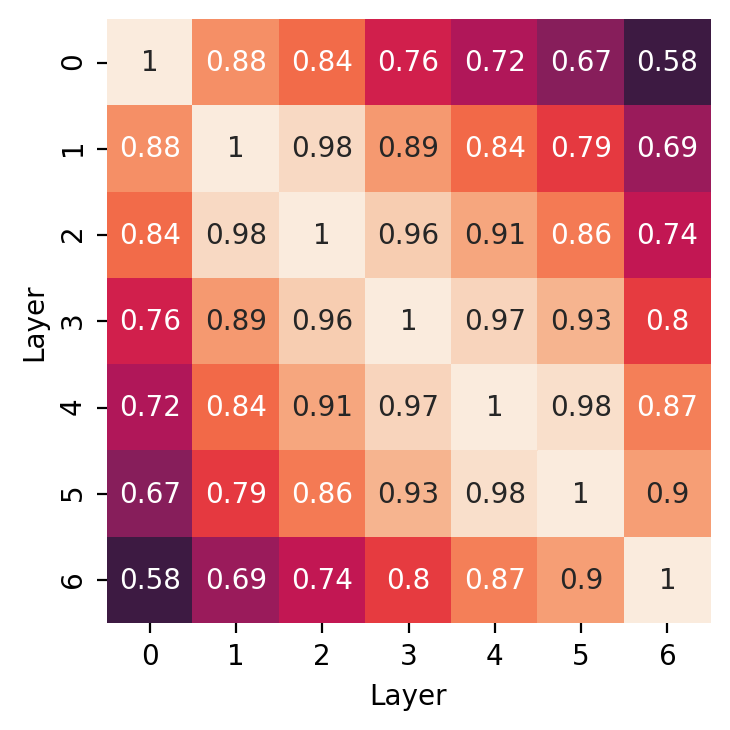}\\

\hspace{5mm}\includegraphics[width=.41\linewidth]{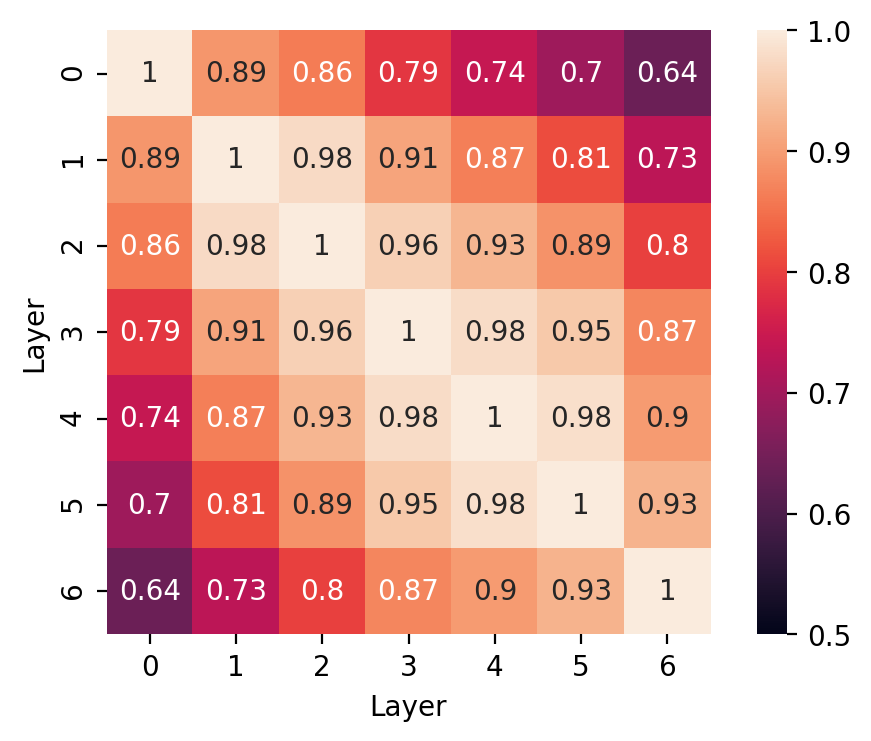}
\end{center}
\caption{The frame CKA scores between layers in the same model when augmentation frames are used. (\textbf{Top}) an ImageNet trained vanilla ResNet50 \cite{marcel2010torchvision}, (\textbf{Middle}) an adversarially trained ImageNet ResNet50 with $\epsilon = 5$, (\textbf{Bottom}) an ImageNet trained ResNet50 with Deep Augmentation. One can see that both heavy augmentation and adversarial training cause a model's representations to increasingly vary between layer. This effect appears to be stronger for adversarial training than heavy augmentation. \label{fig-CKA-plot0}}
\end{figure}

\begin{figure}[h]
\begin{center}
\includegraphics[width=.35\linewidth]{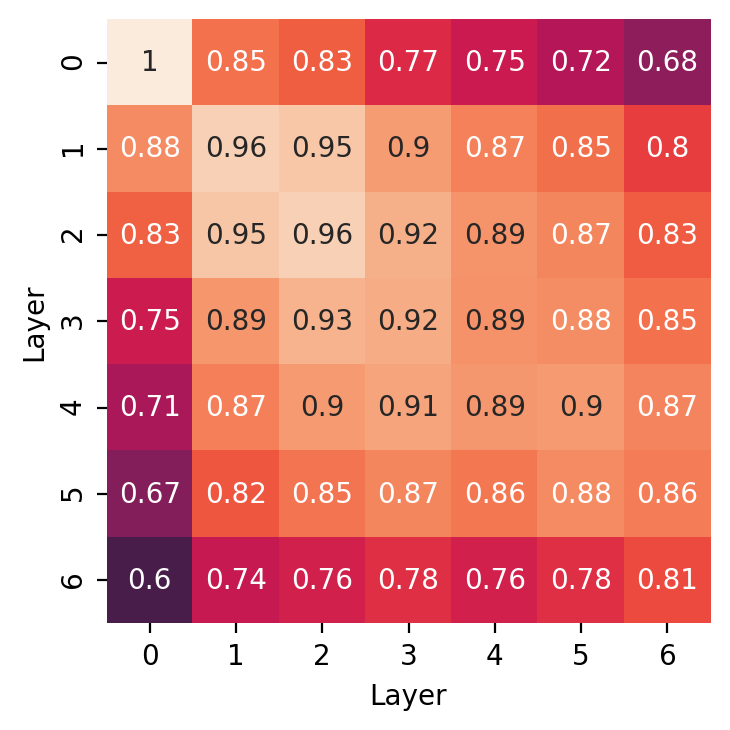}\\
\includegraphics[width=.35\linewidth]{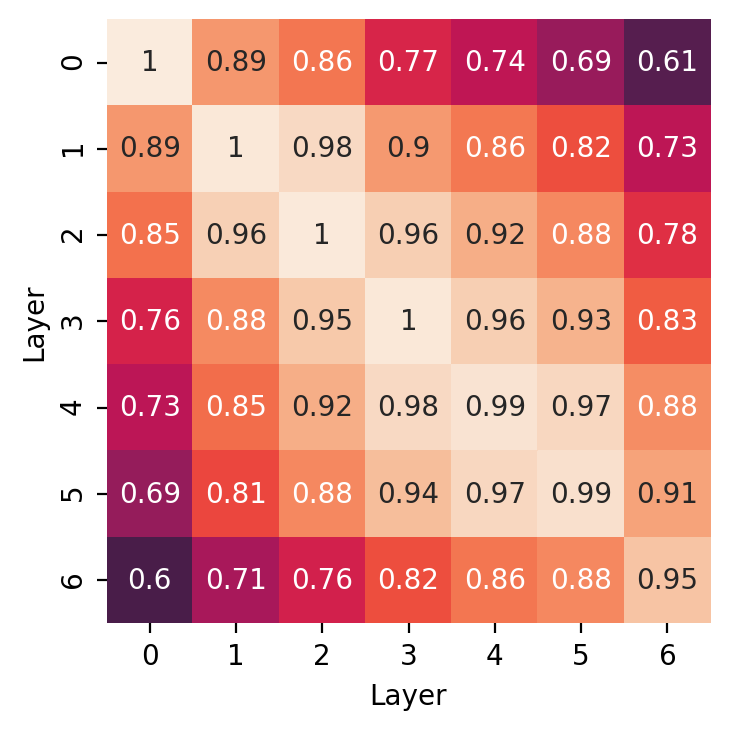}\\
\includegraphics[width=.35\linewidth]{layer_CKA_for_ResNet50_eps5-ResNet50_eps5_annot.png}
\end{center}
\caption{Heatmaps for frame CKA with augmentation frames for three different models (all trained on ImageNet) compared to a ResNet-50 trained on ImageNet with $\epsilon = 5$ adversarial training:  (\textbf{Top}) a ResNet50 trained with $\epsilon = 0.1$ adversarial training, (\textbf{Middle}) a ResNet50 trained with $\epsilon = 3$ adversarial training, (\textbf{Bottom}) and the same ResNet50 trained with $\epsilon = 5$ adversarial training. \label{fig-CKA-plot1}}
\end{figure}

\begin{figure}[h]
\begin{center}
\includegraphics[width=.35\linewidth]{layer_CKA_for_ResNet50v1-ResNet50v1_annot.png}\\
\includegraphics[width=.35\linewidth]{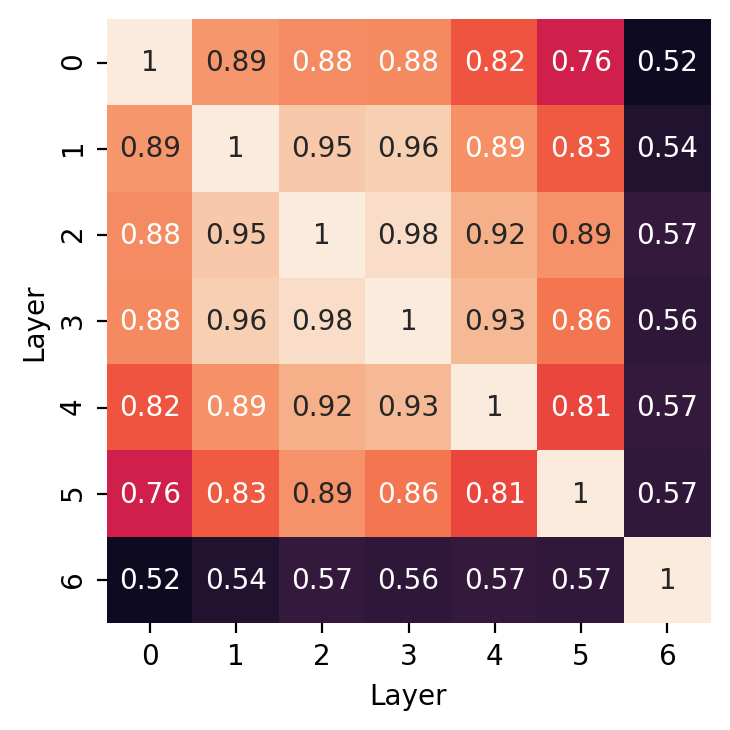}
\end{center}
\caption{Heatmaps for frame CKA with augmentation frames for ResNet50 vs. ResNet50 trained on ImageNet (\textbf{Top}) and ViT vs ViT \cite{dosovitskiy2020image} trained on ImageNet (\textbf{Bottom}). \label{fig-CKA-plot2}}
\end{figure}

In Section \ref{sect-experiments} we noted that frame CKA can illuminate some small-scale properties of a model's representations that are hard to detect with standard CKA. In Figure \ref{fig-CKA-plot1} we show that frame CKA detects the difference in representations between models with adversarial training and models without. Namely, neural frame CKA exhibits that inter-architecture similarity increases with adversarial robustness, in the sense that similarity between layers of different models increases as the $\epsilon$ used in adversarial training increases. For example, the similarity between layers is larger between $\epsilon=3$ and $\epsilon=5$ than between $\epsilon=0.1$ and $\epsilon=5$.\cwg{fix when plots are updated} This was only previously shown with an expensive deconfounding variant of CKA in \cite{jones2022if}. 

In another example, we compare the intra-layer similarity between a ResNet50 and a Vision Transformer. Prior work with CKA had noted that CKA tends to show significant differences between blocks in ResNets but less intra-layer differences in vision transformers. In Figure \ref{fig-CKA-plot1} we show that frame CKA (using augmentation frames) picks up a different signal than standard CKA. We see that frame CKA actually shows greater differences between layers in the ViT suggesting that the outcome of a comparison of these two model types may depend on the scale at which one compares their representations. We note that our observations are consistent with findings in Section \ref{appendix-architecture}.

\section{How does the choice of $k$ impact the stable rank of a frame?}

In our experiments above, we mostly restricted ourselves to $19$-frames as this was the total number of augmentations that we found that were suitable for use in an augmentation frame. It is worth asking what happens when we vary $k$ in a $k$-neural frame. Do our conclusions remain stable? In Figure \ref{fig-vary-k} we show the result of decreasing $k$ for the augmentation frames (described in Section \ref{appendix-augmentation-frames}) for a ResNet50 trained on ImageNet. We find that increasing the value of $k$ in augmentation $k$-frames increases the stable rank of the corresponding neural frames. Nevertheless, the shape of the curves (e.g., layers where stable rank increases) seems to mostly remain the same after sufficiently large $k$. On the other hand, increasing $k$ when using noise frames does not appreciably change the stable rank of the corresponding neural frames (though the stable rank of the input increases predictably with the number of frames). 

The fact that increasing $k$ increases the stable rank of neural augmentation frames but does not increase the stable rank of neural noise frames reinforces the idea that for most models, directions of change associated with augmentation are individually preserved (hence adding them to the input frame causes changes to the corresponding neural frame), whereas noise directions mostly are not. In future work it would be interesting to understand how the addition of specific augmentation directions impact stable rank. Overall, it appears that qualitative patterns in stable rank per layer are mostly preserved when $k$ is changed provided that $k$ is sufficiently large.

\begin{figure}[h]
\begin{center}
\includegraphics[width=.85\linewidth]{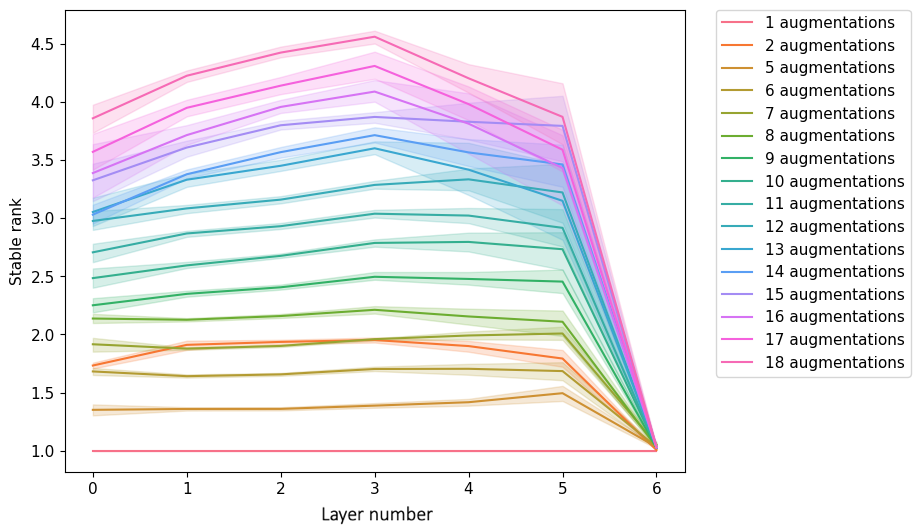}
\end{center}
\caption{The stable rank (by layer) of an ImageNet trained ResNet50 \cite{marcel2010torchvision} evaluated with respect to augmentation frames with varying $k$. The order in which specific augmentations were added was chosen randomly. \label{fig-vary-k}}
\end{figure}

\begin{figure}[h]
\begin{center}
\includegraphics[width=.85\linewidth]{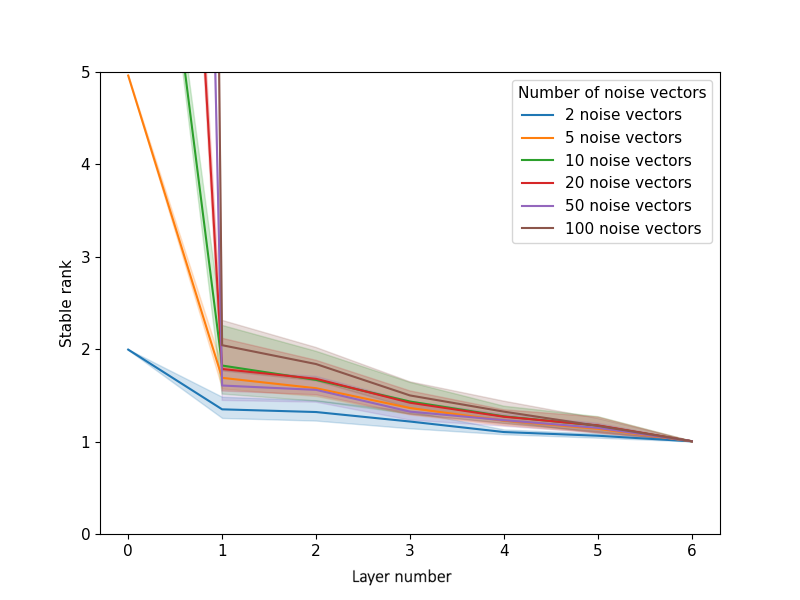}
\end{center}
\caption{The stable rank (by layer) of an ImageNet trained ResNet50 \cite{marcel2010torchvision} evaluated with respect to noise frames for varying $k$. \label{fig-vary-k-nose}}
\end{figure}

\section{Are different types of frames processed differently in a transformer architecture?}

In Section \ref{sect-experiments} we showed that different types of frames are processed very differently by a ResNet50. One might ask if a similar statement holds for vision transformers, which have substantially different architecture. In Figure \ref{fig-stable-rank-different-frames-vit} we show a similar plot to that found in Figure \ref{fig-stable-rank-different-frames-resnet50}. We see a similar phenomenon holds with minor differences. For example, the vision transformer tends to preserve a noise frame somewhat longer than the ResNet50 does. Given the results in Section \ref{sect-experiments}, we speculate that this may relate to vision transformer’s purported adversarial robustness \cite{paul2022vision}.

\begin{figure}[h]
\begin{center}
\includegraphics[width=.85\linewidth]{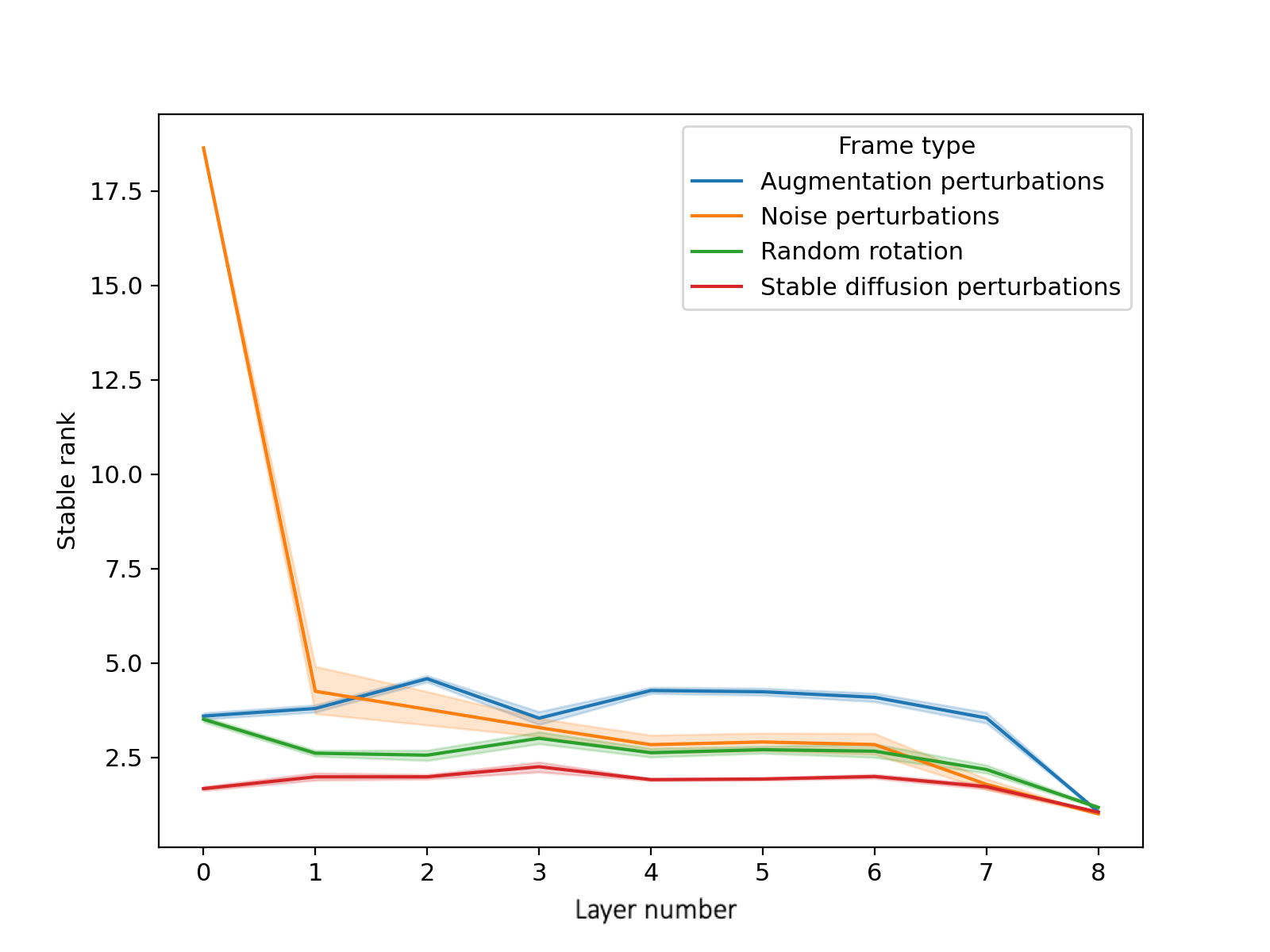}
\end{center}
\caption{The stable rank of different types of frames measured at various layers of a ViT model. The ResNet50 version of this plot is found in Figure \ref{fig-stable-rank-different-frames-resnet50}. Layer zero corresponds to model input and the last layer corresponds to model output. Shaded regions indicate $95\%$ confidence intervals over $40$ randomly selected ImageNet images. \label{fig-stable-rank-different-frames-vit}}
\end{figure}


\section{Stable rank over the course of training}

To better understand how the stable rank of a frame changes over the course of training, we saved the weights of a ResNet18 \cite{he2016deep} trained from scratch on ImageNet every $10$ iterations (for $1000$ iterations) and then every $100$ iterations for the approximately $16$ remaining epochs. The training hyperparameters that we used can be found in Table \ref{table-hyperparameters}. 

In Figure \ref{fig-stable-training} we show the stable rank (by layer) for this ResNet18 with respect to an augmentation frame at different stages of training. We see that at a large scale the general trend is for stable rank to increase as training increases, but that these changes are most significant in the later layers of the model. For example, the latent space layer (layer 5), increases from an initial stable rank around $1.5$ to a stable rank of $3.5$, an increase of $2$, while the stable rank of layer 1 (in one of the first blocks of the model), only increases from $3.5$ to $4$, an increase of only $.5$. We conjecture that one effect of the later stages of training is that a model gains the tendency to preserve those frames related to natural changes of an image. This guess is supported by Figure \ref{fig-training-early} (right) which shows that while stable rank increases throughout training for frames of naturalistic directions, it decreases for noise frames whose directions lack any connection to the content of the image.

Interestingly, we find stable rank also peaks (though not as high as later) once in the early iterations of training. In Figure \ref{fig-training-early} (left) we see that stable rank increases for approximately the first 50 iterations of training and then decreases again until around iteration 200. It then slowly increases for the rest of training. It would be interesting to understand what drives these dynamics.

\begin{figure*}[h]
\begin{center}
\includegraphics[width=.75\linewidth]{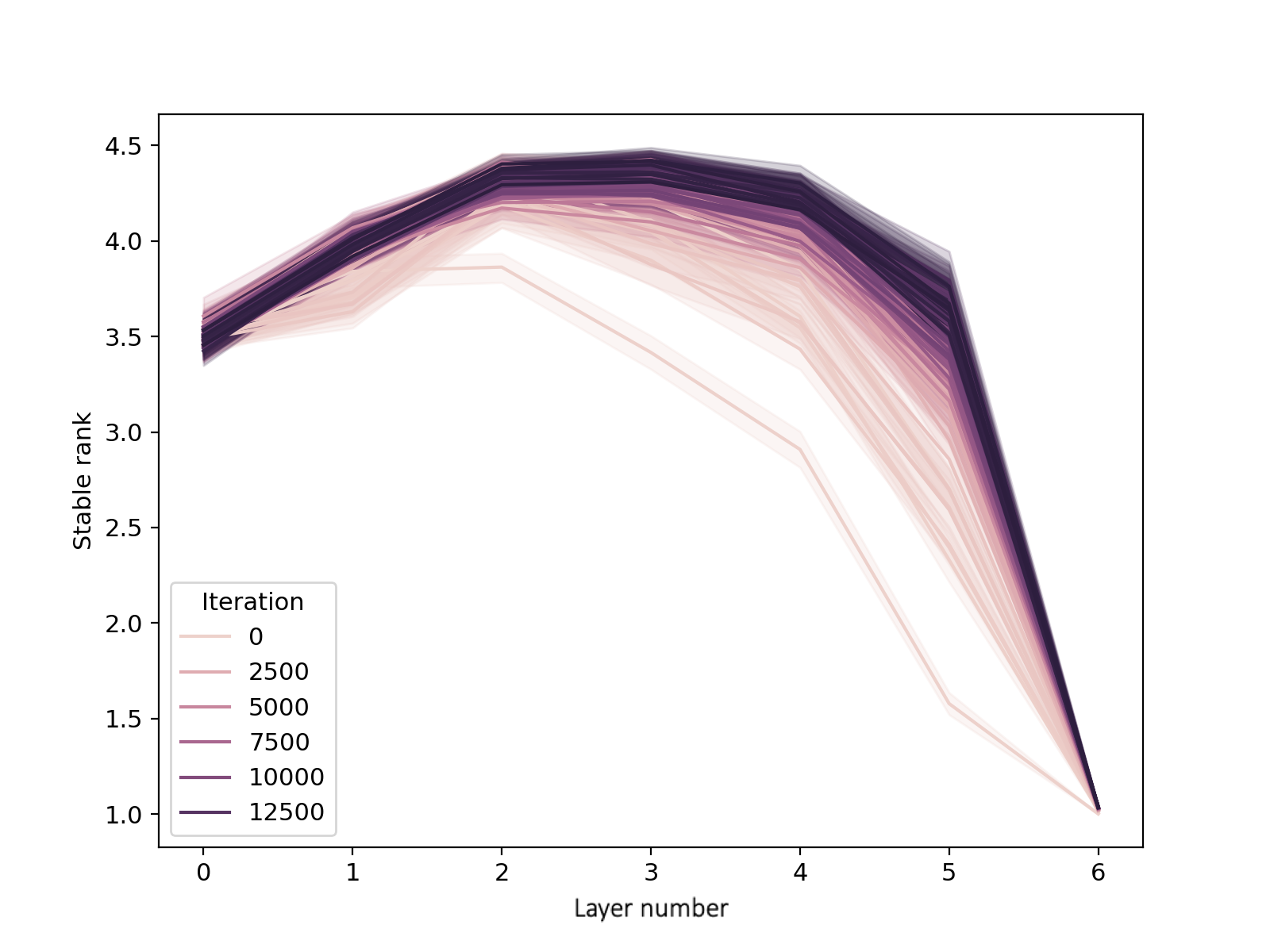}
\includegraphics[width=.75\linewidth]{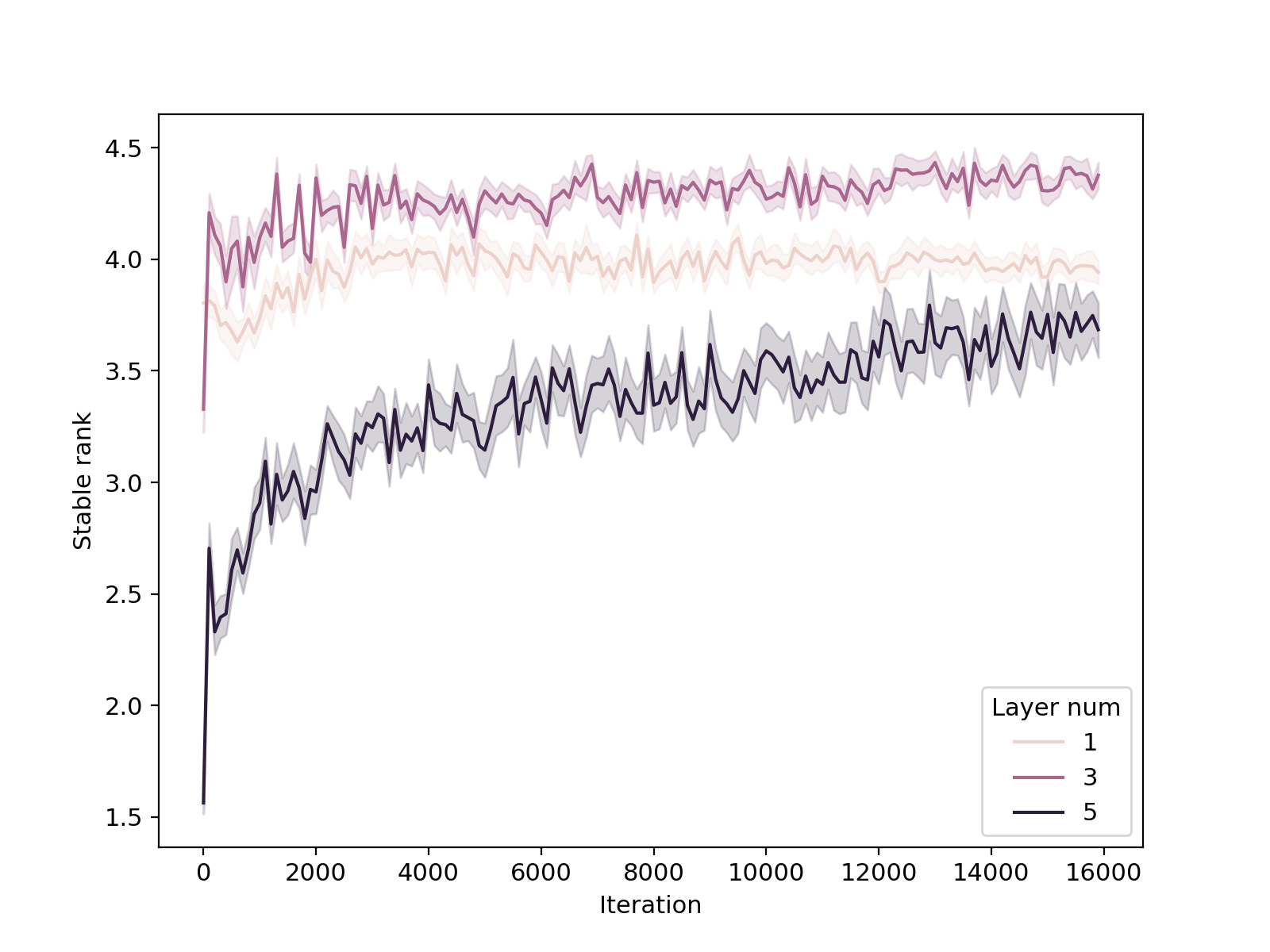}
\end{center}
\caption{\textbf{(Left)} The stable rank of an augmentation frame (by layer) for a ResNet18 trained from scratch. Different colored curves correspond to the number of iterations of training that the model has undergone. {\textbf{(Right)}} The stable rank of different layers of the model as a function of the number of training iterations. Shaded regions in both plots indicate $95\%$ confidence intervals over $40$ random ImageNet images. \label{fig-stable-training}}
\end{figure*}

\begin{figure*}[h]
\begin{center}
\includegraphics[width=.75\linewidth]{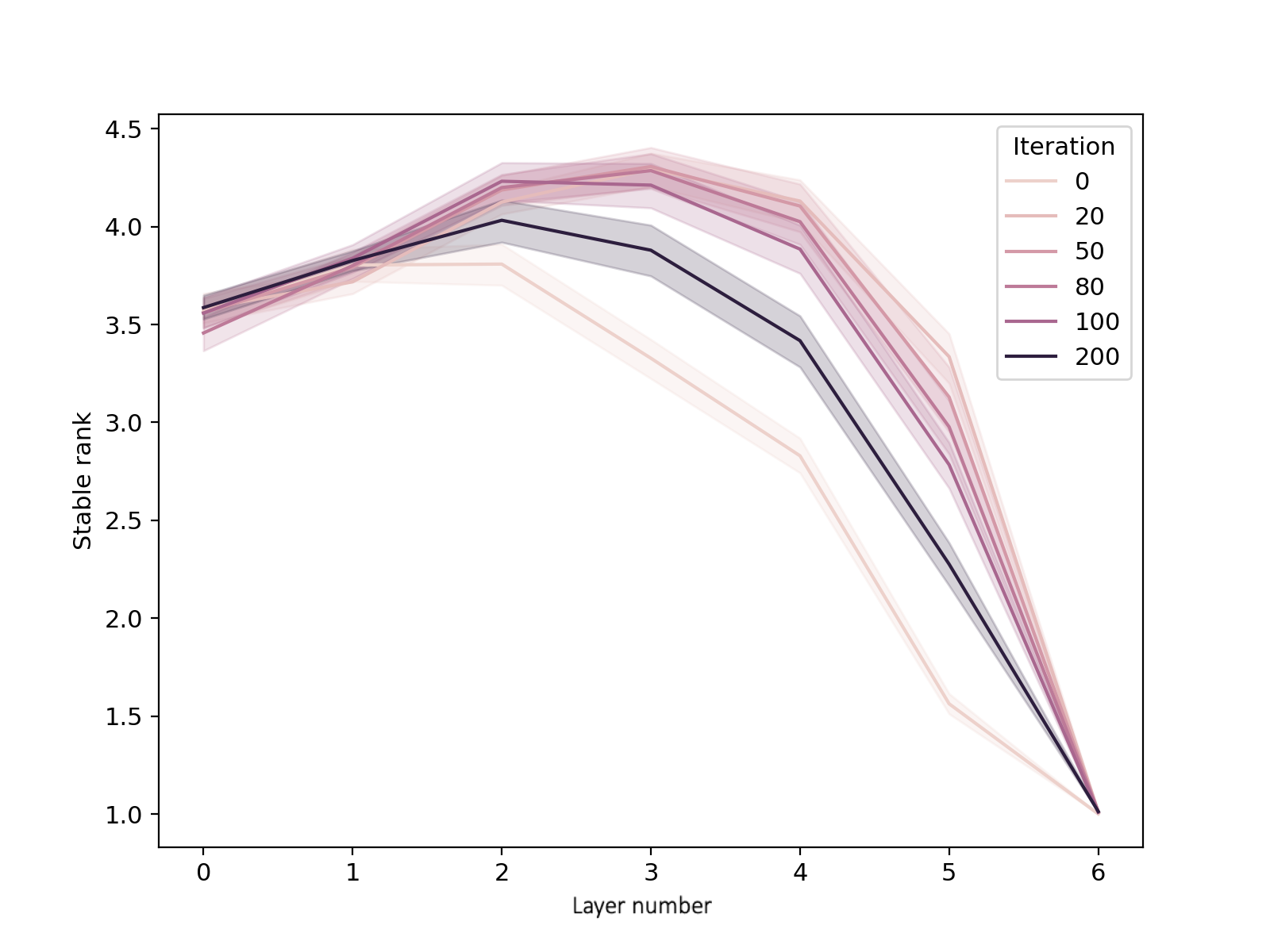}
\includegraphics[width=.75\linewidth]{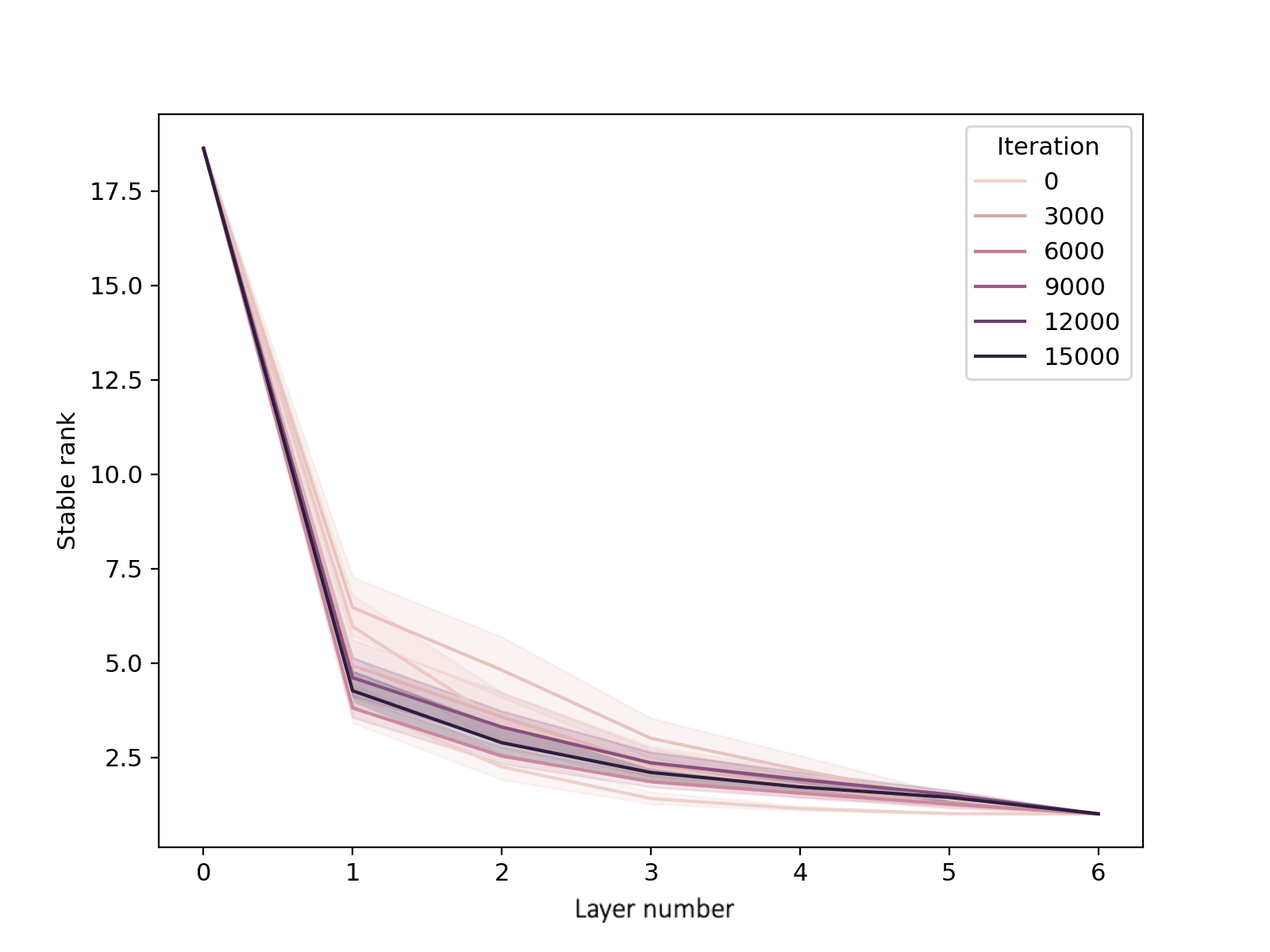}
\end{center}
\caption{\textbf{(Left)} The stable rank of an augmentation frame (by layer) during early iterations of a  ResNet18 trained from scratch. {\textbf{(Right)}} The stable rank of the ResNet18 model (by layer) evaluated on a noise frame. In both plots, different colored curves correspond to the number of iterations of training that the model has undergone.\label{fig-training-early}}
\end{figure*}

\section{Stable rank and architecture}
\label{appendix-architecture}

In Figures \ref{fig-stable-rank-architectures-CNN} and \ref{fig-stable-rank-architectures-ViT} we plot the stable rank (as a function of layer) for an augmentation frame and two different families of architectures: CNNs and vision transformers. Shaded regions depict $95\%$ confidence intervals calculated over $40$ random ImageNet images. The CNN architectures that we plot (left) are DenseNet121 \cite{huang2017densely}, InceptionV3 \cite{szegedy2016rethinking}, ResNet50 \cite{he2016deep}, and ResNeXT50 \cite{xie2017aggregated}. On the right we plot hidden layers from transformers: ViT \cite{dosovitskiy2020image} and Swin \cite{liu2021swin}. All use the default ImageNet torchvision \cite{marcel2010torchvision} weights.
We note two trends in these plots: 
\begin{enumerate}[(i)]
    \item \label{item:plateau} All curves consist of a plateau spanning most layers of the model followed by a dramatic dropoff in stable rank at the last layers.
    \item \label{item:osc} The transfomer models exhibit significantly more \emph{fluctuation} in stable rank than the CNNs. 
\end{enumerate}
\Cref{item:plateau} could be partially explained by the fact that \emph{all} models studied in this paper, both CNNs and transformers, include residual connections. Note that a toy residual network with \(n\)-dimensional feature spaces and identity activations consists of a composition of layers of the form \( I_n + W \), where \(I_n\) is an identity matrix and \(W\) a \(n\times n\) weight matrix. These have singular values of the form \(1 + \sigma_i\), where \(\{\sigma_i\}\) are the singular values of \(W\), and thus stable rank 
\begin{equation}
\label{eq:res-stable-rank}
    \frac{1}{(1+ \sigma_1)^2} \sum_i (1+ \sigma_i)^2.
\end{equation}
Suppose \(W\) is a random matrix with IID entries sampled from \(\mathcal{N}(0, \frac{2}{n})\) (this is true before training with He normal initialization \cite{he2016deep}). Then a calculation using \cite{mpdist} shows that for large \(n\)  the expected stable rank of \(I_n + W \) is approximately
\begin{equation}
    \frac{n}{(1+ 2 \sqrt{2})^2} \int_0^{2\sqrt{2}} (1+y^2)\sqrt{8-y^2} \frac{dy}{2 \pi} \approx 0.37 n.
\end{equation}
Provided this is larger than the stable rank of the neural frame in the preceding layer, Lemma \ref{lem:stable-rank-noninc} might suggest that \(I_n + W \) preserves the stable rank of the neural frame.\footnote{The expected stable rank of \(W \) itself is \(\frac{n}{4} \).}

As for \cref{item:osc}, while these experiments alone are insufficient to identify a reason for this apparent difference, one could make a number of different conjectures. It could be, for instance, that the priors hardcoded into CNNs dampen the extent to which the models stretch and compress an image manifold. Alternatively, it might be that the nonlinearity of attention layers leads to more geometric changes layer-to-layer. This nonlinearity is in contrast to the convolutions in CNNs, which are linear in isolation (that is, not considering the nonlinear layers that often follow them). These questions would be interesting to address in follow-up work.\\

\begin{figure*}[h]
\begin{center}
\includegraphics[width=.75\linewidth]{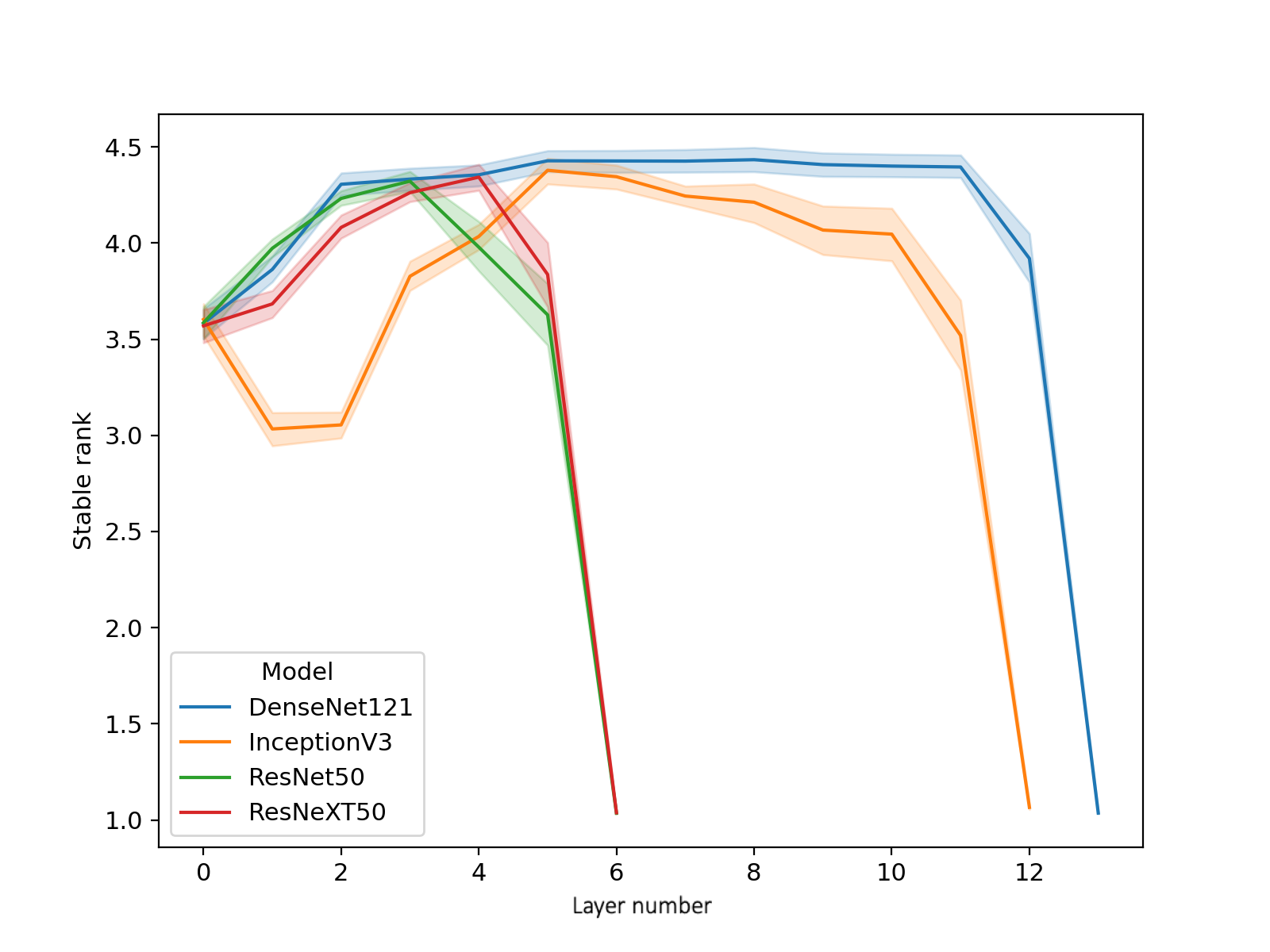}
\end{center}
\caption{The stable rank, by layer, for a range of different CNN architectures. Shaded regions indicate $95\%$ confidence intervals over $40$ random ImageNet images. \label{fig-stable-rank-architectures-CNN}}
\end{figure*}

\begin{figure*}[h]
\begin{center}
\includegraphics[width=.75\linewidth]{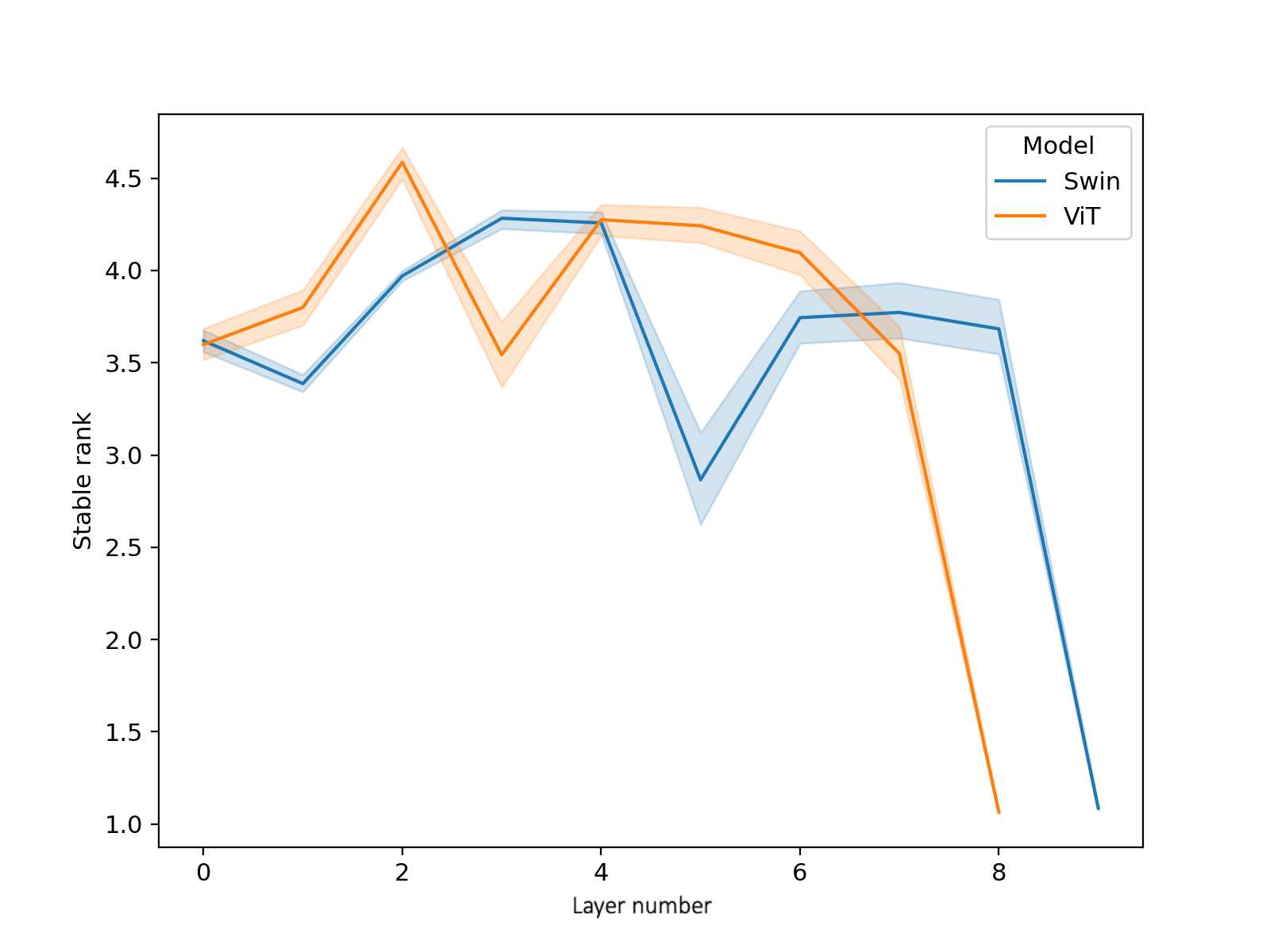}
\end{center}
\caption{The stable rank, by layer, for a range of different transformer architectures. Shaded regions indicate $95\%$ confidence intervals over $40$ random ImageNet images. \label{fig-stable-rank-architectures-ViT}}
\end{figure*}

\section{Geometric Background}
\label{appendix-geo-background}

In this paper we take the word `manifold' to mean a smooth manifold in the formal sense. A comprehensive reference on manifolds is \cite{lee2013smooth} --- here for convenience we briefly introduce key geometric objects of interest: tangent bundles, their sub-bundles, and frames.

Let $M \subset \mathbb{R}^n$ be an $m$-dimensional smooth manifold, $x$ a point on $M$, and suppose $\gamma: (-1, 1) \rightarrow M$ is a smooth path on $M$ such that $\gamma(0) = x$. The {\emph{tangent vector}} associated with $\gamma$ at $x$ is the derivative of $\gamma$ at $0$, $\gamma'(0)$. The tangent space $T_xM$ of $M$ at $x$ is the vector space of all such tangent vectors \(\gamma'(0)\) at $x$ (here \(\gamma\) varies over all possible smooth paths in $M$ that pass through $x$). The {\emph{tangent bundle}} $TM$ of $M$ is the union of all tangent spaces for each $x \in M$, $\coprod_{x \in M} T_xM$ --- it is a manifold in its own right of dimension $2m$. For more details see Section 3 in \cite{lee2013smooth}.

The tangent bundle can be thought of as the assignment of a vector space to each point in $M$; the concept of a vector bundle generalizes the tangent bundle of a manifold. Informally, a vector bundle is a smooth map $\pi: E \rightarrow M$ such that the fibers \(E_x := \pi^{-1}(x) \subseteq E\) are finite-dimensional real vector spaces isomorphic to $\mathbb{R}^l$ and they vary smoothly with respect to \(x\) in the sense that for any $x \in M$ there is a neighborhood $U$ of $x$ with a diffeomorphism $\varphi_x: \pi^{-1}(U) \to U \times \mathbb{R}^l$. A sub-vector bundle \(F \subseteq E \) is an embedded submanifold which is itself a vector bundle over \(M\) (with respect to the induced smooth map \(F \subseteq E \xrightarrow{\pi} M\)) such that for each point \(x\in M\), \(F_x \subseteq E_x\) is a linear subspace. For formal definitions we refer to Section 10 in \cite{lee2013smooth}. Our interest is in sub-vector bundles of tangent bundles $TM$.



A common way to obtain such sub-vector bundles in practice is from vector fields. Recall that in the case of the tangent bundle, the map $\pi: TM \rightarrow M$ is the map such that if $z \in T_xM \subset TM$, then $\pi(z) = x$. Then a smooth vector field is formally a smooth function \(v: M \to TM \) with the property that \(\pi(v(x)) = x \) for all \(x \in M\) (informally this corresponds to the usual notion that a vector field consists of a choice of tangent vector for each $x \in M$). We can use parametrized functions from our manifold to itself to construct vector fields.

\begin{lemma}[{Prop 9.7 \cite{lee2013smooth}}]
\label{lem:flows}
If $f: (-1,1) \times M \rightarrow M$ is a smooth function on smooth manifold $M$ with the property that \(f(0, x) = x\) for all \(x \in M\), then the function \(v(x) = \frac{\partial}{\partial t}f(t,x)|_{t=0}\) is a smooth vector field.
\end{lemma}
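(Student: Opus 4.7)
The plan is to verify directly the two conditions that characterize a smooth vector field: (a) that $v(x)$ genuinely lies in the tangent space $T_xM$ for each $x$, and (b) that the resulting assignment $x \mapsto v(x)$ is smooth as a map $M \to TM$.

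For (a), the strategy is to realize $v(x)$ as the velocity of a smooth curve. Fix $x \in M$ and define $\gamma_x: (-1,1) \to M$ by $\gamma_x(t) = f(t,x)$. This is the composition of $f$ with the smooth injection $t \mapsto (t,x)$ from $(-1,1)$ into $(-1,1) \times M$, so $\gamma_x$ is smooth. The hypothesis $f(0,x) = x$ gives $\gamma_x(0) = x$. By the definition of the tangent space recalled at the start of \Cref{appendix-geo-background}, the derivative $\gamma_x'(0) \in T_xM$, and by construction this equals $\frac{\partial}{\partial t} f(t,x)\big|_{t=0} = v(x)$. In particular $\pi(v(x)) = x$, so $v$ is a section of the tangent bundle.

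For (b), the strategy is to check smoothness in local coordinates on $M$ and the induced coordinates on $TM$. Let $(U,\varphi)$ be a smooth chart on $M$ with coordinates $(y^1,\ldots,y^m)$, and write the local coordinate expression of $f$ on a neighborhood of $\{0\} \times U$ as $(t,y) \mapsto (f^1(t,y),\ldots,f^m(t,y))$, which is smooth because $f$ is. The induced chart on $\pi^{-1}(U) \subseteq TM$ sends a tangent vector $\sum_i a^i \frac{\partial}{\partial y^i}\big|_y$ to $(y^1,\ldots,y^m,a^1,\ldots,a^m)$. In these coordinates
\begin{equation*}
v(y) \;=\; \Big(y^1,\ldots,y^m,\; \tfrac{\partial f^1}{\partial t}(0,y),\ldots,\tfrac{\partial f^m}{\partial t}(0,y)\Big).
\end{equation*}
Each component $\frac{\partial f^i}{\partial t}(0,y)$ is smooth in $y$ because partial derivatives of a smooth function are smooth and then restriction to $t=0$ preserves smoothness. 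Hence $v$ is smooth on $U$, and since $U$ was an arbitrary chart domain, $v$ is smooth on $M$.

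I do not anticipate a real obstacle: the only subtlety is being careful that joint smoothness of $f$ in $(t,x)$ (not just separate smoothness) is what lets us differentiate in $t$ and still obtain a smooth function of $x$, and that the resulting coordinate vector lies in the correct fibre by the $f(0,x)=x$ condition. If one wanted to avoid coordinates entirely, an alternative is to exhibit $v$ as the composition $M \xrightarrow{\iota_0} (-1,1) \times M \xrightarrow{f} M$ followed by the time-zero velocity map, which is a smooth construction built from the differential of $f$ applied to the canonical vector field $\partial_t$ along $\{0\} \times M$; this gives the same conclusion without choosing charts.
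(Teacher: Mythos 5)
Your argument is correct, but it takes a more elementary, coordinate-based route than the paper's proof. The paper works entirely coordinate-free: it invokes the natural splitting $T\bigl((-1,1)\times M\bigr) \simeq T(-1,1)\times TM$ (Prop.\ 3.14 in \cite{lee2013smooth}), identifies $T(-1,1) \simeq (-1,1)\times\RR$, and then writes $v$ explicitly as the composition $M \to (-1,1)\times\RR\times TM \xrightarrow{df} TM$, $x \mapsto df\bigl((0,1),z(x)\bigr)$, where $z$ is the zero section; smoothness of $v$ is then automatic because each map in the chain is smooth, and the section condition $\pi\circ v = \mathrm{id}_M$ is read off from $f(0,x)=x$. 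You instead split the claim into the fibre condition (your part (a), via the curve $\gamma_x(t)=f(t,x)$) and smoothness (your part (b), verified in an adapted chart on $TM$ induced by a chart $(U,\varphi)$ on $M$). Both routes are valid: yours is closer to first principles and makes the role of joint smoothness of $f$ in $(t,x)$ completely transparent, at the cost of picking coordinates; the paper's buys chart-independence and brevity at the cost of importing the tangent-bundle splitting and zero-section machinery. You in fact sketch the paper's construction yourself in your closing remark --- the ``composition $M \xrightarrow{\iota_0} (-1,1)\times M \xrightarrow{f} M$ followed by the time-zero velocity map'' and the differential applied to $\partial_t$ along $\{0\}\times M$ is precisely the map $x \mapsto df\bigl((0,1),z(x)\bigr)$ the paper writes down --- so you have both proofs in hand; the paper simply leads with the one you mention as an aside.
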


\begin{lemma}
\label{lem:non-vanishing-determinants}
Let \(v_1, \dots, v_k : M \to TM \) be smooth vector fields on an $m$-dimensional manifold $M$. 
\begin{enumerate}[(i), nosep]
    \item The set $U$ of all $x$ in $M$ such that $v_1(x), \dots, v_k(x)$ are linearly independent is open.\footnote{although possibly empty.}
    \item The (sub)spaces \(\mathrm{span}(v_1(x), \dots, v_k(x)) \subseteq T_x M\) form a sub-vector bundle of the tangent bundle of \(U\).
\end{enumerate}
\end{lemma}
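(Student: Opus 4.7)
The overall strategy is entirely local: reduce to a coordinate neighborhood on $M$ where $TM$ trivializes, so that both (i) and (ii) become statements about a smoothly varying $m \times k$ matrix of coordinates of the $v_i$. Statement (i) then follows from continuity of determinants, while (ii) requires building a local trivialization of the candidate sub-bundle, which is the main technical step.

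For (i), fix $x_0 \in U$ and choose a coordinate chart on $M$ around $x_0$ producing a smooth local trivialization $\psi : TM|_V \to V \times \mathbb{R}^m$. Each vector field $v_i$ is represented in this trivialization by a smooth map $\tilde v_i : V \to \mathbb{R}^m$. Assemble the $m \times k$ matrix $A(x) = [\tilde v_1(x) \mid \cdots \mid \tilde v_k(x)]$. Linear independence of $v_1(x_0), \dots, v_k(x_0)$ means $A(x_0)$ has rank $k$, so some $k \times k$ minor of $A(x_0)$ is nonzero. Since the entries of $A$ depend smoothly (hence continuously) on $x$, the same minor remains nonzero on an open neighborhood $W \subseteq V$ of $x_0$, giving $W \subseteq U$. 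Thus $U$ is open.

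For (ii), let $E = \coprod_{x \in U} \mathrm{span}(v_1(x), \dots, v_k(x)) \subseteq TU$. For each $x_0 \in U$ I want to exhibit a neighborhood $W \subseteq U$ and a diffeomorphism $\Phi : W \times \mathbb{R}^k \to E|_W$ that is linear on fibers. The natural candidate is $\Phi(x, a_1, \dots, a_k) = \sum_{i=1}^k a_i v_i(x)$: it is smooth, fiberwise linear, and a bijection onto $E|_W$ because $v_1(x), \dots, v_k(x)$ are a basis of the fiber for every $x \in W \subseteq U$.

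The main obstacle is confirming that $E|_W$ is an embedded submanifold of $TM|_W$ and that $\Phi$ is a diffeomorphism onto it, as opposed to merely a smooth bijection. I would resolve this by extending $v_1, \dots, v_k$ to a local frame of $TM$. After possibly shrinking $W$, pick constant vectors $w_{k+1}, \dots, w_m \in \mathbb{R}^m$ so that $\tilde v_1(x_0), \dots, \tilde v_k(x_0), w_{k+1}, \dots, w_m$ is a basis of $\mathbb{R}^m$; continuity of the determinant of the resulting $m \times m$ matrix ensures the basis property persists on a smaller neighborhood, which I again call $W$. Pulling the $w_j$ back through $\psi$ yields smooth vector fields $v_{k+1}, \dots, v_m$ on $W$ such that $v_1, \dots, v_m$ is a local frame of $TM|_W$. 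This frame defines a new trivialization $\psi' : TM|_W \to W \times \mathbb{R}^m$ in which $E|_W$ corresponds precisely to $W \times (\mathbb{R}^k \times \{0\})$, simultaneously realizing $E|_W$ as an embedded submanifold and producing the desired vector bundle chart. Patching these local trivializations over $U$ gives the sub-vector bundle structure, completing the proof.
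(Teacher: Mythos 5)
Your argument matches the paper's: both reduce to a coordinate chart, prove (i) via continuity/smoothness of the $k\times k$ minors of the matrix $[\tilde v_1(x)\mid\cdots\mid\tilde v_k(x)]$, and use the fiberwise-linear map $(x,(c_1,\dots,c_k))\mapsto\sum_i c_i v_i(x)$ to trivialize $\mathrm{span}(v_1,\dots,v_k)$ over $U$ for (ii). The local-frame-extension step you spell out is precisely the detail the paper dismisses as ``straightforward to check,'' and it is the right way to verify embeddedness and that the trivialization is a diffeomorphism rather than merely a smooth bijection.
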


We put the two lemmas above together to give the statement that will form the basis for one type of neural frames that we introduce in the next section.

\begin{corollary}
\label{cor-bundle-from-functions}
Suppose that $\mathcal{F} = \{f_i: (-1,1)\times M \to M \, | \, i = 1,\dots, k\}$ is a collection of smooth maps such that $f_i(0,x)=x$ for all \(x \in M\), and let $v_i(x) = \frac{\partial}{\partial t}f_i(t,x)|_{t=0}$. If $v_1(x), \dots, v_k(x)$ are linearly independent in $T_xM$ for all $x \in M$, then $f_1, 
\dots, f_k$ define a $k$-dimensional vector bundle on $M$ which we denote by $V_{\mathcal{F}}$ and $v_1(x), \dots, v_k(x)$ is a $k$-frame of this vector bundle.
\end{corollary}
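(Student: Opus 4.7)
The proof is essentially an assembly of the two preceding lemmas, so the plan is to chain them together and then verify the frame condition.

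First I would use \cref{lem:flows}: since each $f_i:(-1,1)\times M \to M$ is smooth and satisfies $f_i(0,x)=x$, the formula $v_i(x) = \frac{\partial}{\partial t}f_i(t,x)|_{t=0}$ defines a smooth vector field on $M$ for each $i=1,\dots,k$. This step is immediate from the cited lemma and requires no additional work beyond observing that the hypotheses of \cref{lem:flows} are exactly the hypotheses placed on each $f_i$ in the corollary.

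Next I would feed these $k$ smooth vector fields into \cref{lem:non-vanishing-determinants}. Part (i) of that lemma gives an open set $U \subseteq M$ on which $v_1(x),\dots,v_k(x)$ are linearly independent; our hypothesis is that this holds at every $x \in M$, so $U = M$. Then part (ii) of the same lemma, applied to $U = M$, shows that the collection of subspaces $\mathrm{span}(v_1(x),\dots,v_k(x)) \subseteq T_x M$ assembles into a sub-vector bundle of $TM$. I would denote this sub-bundle by $V_{\mathcal{F}}$, as in the statement of the corollary.

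Finally I would check the $k$-frame condition. By construction the fiber of $V_{\mathcal{F}}$ over $x$ is exactly $\mathrm{span}(v_1(x),\dots,v_k(x))$, which is $k$-dimensional because the $v_i(x)$ are linearly independent, so $V_{\mathcal{F}}$ is a rank-$k$ vector bundle. At every $x$ the tuple $(v_1(x),\dots,v_k(x))$ is therefore a basis of the fiber $(V_{\mathcal{F}})_x$, and this basis varies smoothly in $x$ because the $v_i$ are smooth vector fields; this is precisely the definition of a $k$-frame of $V_{\mathcal{F}}$ given earlier in \cref{sect-neural-frames}.

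I do not anticipate a genuine obstacle here: the corollary is really a packaging statement that names and records the bundle/frame produced by the two lemmas. The only thing that needs care is the logical bookkeeping, namely noting that the global linear-independence hypothesis lets us take $U = M$ so that the sub-bundle is defined on all of $M$ rather than on a proper open subset.
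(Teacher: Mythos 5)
Your proposal is correct and matches the paper's intent exactly: the paper states the corollary as a direct consequence of \cref{lem:flows} and \cref{lem:non-vanishing-determinants} without a separate proof, and your chaining of those two lemmas (with the global linear-independence hypothesis forcing $U=M$) is precisely the intended argument.
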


The geometric machinery we have introduced in this section will provide the framework for neural frames, which we introduce below. We note however that this framework is supposed to act as a guide, not a guarantee. Indeed, by necessity, we will have to violate certain assumptions when running experiments. For example, many popular deep learning architectures are not actually smooth everywhere and some of our augmentations will not be smooth either (largely due to the discrete nature of digital images). However, we have tried to choose functions that are at least moderately well-behaved.

\section{Proofs}

The statement of \cref{lem:flows} and the proof below is very similar to those of Proposition 9.7 in \cite{lee2013smooth}, however note that our hypotheses on \(f\) are weaker.
\begin{proof}[Proof of \cref{lem:flows}]
First, by Proposition 3.14 in \cite{lee2013smooth} there is a natural decomposition of tangent bundles
\begin{equation}
    T \big((-1, 1) \times M\big) \simeq T (-1, 1) \times TM
\end{equation}
and so by Proposition 3.21 in \cite{lee2013smooth} the differential of \(f\) can be naturally identified as a smooth map 
\begin{equation}
    df: T (-1, 1) \times TM \to TM.
\end{equation}
Now as \(T (-1, 1) \simeq (-1, 1)\times \RR\) (for example combining Proposition 3.9 and 3.13 in \cite{lee2013smooth}), letting \(z : M \to TM\) denote the zero section we can define a smooth map \(\frac{\partial f(t, x)}{\partial t}|_{t=0}\) as
\begin{equation}
    \begin{split}
        &M \xrightarrow[]{\sigma}  (-1, 1)\times \RR \times TM  \simeq  T (-1, 1) \times TM  \xrightarrow[]{df} TM  \\
        &\text{sending } x \mapsto (0, 1, z(x)) \mapsto df((0, 1), z(x))
    \end{split}
\end{equation}
(one can check in coordinates that this is indeed a partial derivative with respect to \(t\), hence the notation). It remains to check that if \(\pi: TM \to M\) is the projection, we have \( \pi(\frac{\partial f(t, x)}{\partial t}|_{t=0}) = x \), and this follows from the hypothesis that \(f(0,x) = x\) for all \(x \in M\).
\end{proof}

\begin{proof}[Proof of \cref{lem:non-vanishing-determinants}]
The statement is local on \(M\). We may therefor assume that \(M \subseteq \RR^m \) is an open subset of some euclidean space, in which case we have a canonical identification \(TM \simeq M \times \RR^m\). Since a section of the projection \(\pi: M \times \RR^m \to  M\) is equivalent to a function \(M \to \RR^m \), we may now identify \(v_1, \dots, v_k \) as smooth functions
\begin{equation}
    \begin{split}
        &v_1, \dots, v_k: M \to \RR^m, \text{ whose product is a smooth map} \\
        & (v_1, \dots, v_k): M \to (\RR^m)^k
    \end{split}
\end{equation}
Finally, let \(m_J: (\RR^m)^k \to \RR\) be the \(k \times k \) minors of \(m \times k\) matrices, where \(J \subset \{1, \dots, m\} \) ranges over subsets of cardinality \(k\) --- these are polynomials and hence smooth. The set 
\begin{equation}
    \begin{split}
        &\{(w_1, \dots, w_k) \in (\RR^m)^k \\
        & \, | \, w_1, \dots, w_k \text{ are linearly independent} \}
    \end{split}
\end{equation}
can be identified as the locus where \( \prod_J m_J (w_1, \dots, w_k) \neq 0\); since the non-vanishing locus of a smooth function is open, we have proved part (i) of the lemma.

For part (ii), by the very definition of \(U \) the functions \(v_1, \dots, v_k \) define a global isomorphism 
\begin{equation}
    \begin{split}
        &\varphi: U\times \RR^k \to \mathrm{span}(v_1, \dots, v_k) \\
        &\text{where } \varphi(x, (c_1, \dots, c_k)) = \sum_i c_i v_i (x).
    \end{split}
\end{equation}
Using this, it is straightforward to check that \(\mathrm{span}(v_1, \dots, v_k) \) is a vector bundle over \(U \).
\end{proof}

\begin{remark}
All of the above proofs work even if the \(v_i \) are continuous (smoothness is not required). The proof of (ii) in fact shows the stronger statement that \(\mathrm{span}(v_1, \dots, v_k)\) is a \emph{trivial} vector bundle over \(U\). It is worth mentioning that in the \emph{global} context where \cref{lem:non-vanishing-determinants} is stated, there are examples where it is impossible that \(U = M \), even when \(k = 1\): for example, the famous ``hairy ball theorem'' says that when \(M = S^2 \) (a \(2\)-sphere), every globally defined vector field \(v: M \to TM \) vanishes at at least one point.  
\end{remark}

We sketch here a statement making precise the sense that for almost all sets of \(k \leq \dim M\) vector fields \(v_1, \dots, v_k: M \to TM \) the locus 
\begin{equation}
\label{eq:non-degen-locus}
    \{ x\in M \, | \, v_1(x), \dots, v_k(x) \text{ are linearly independent} \} 
\end{equation}
is a dense open set with measure 0 complement. Let  \(\Gamma(M, TM) \) be the real vector space of vector fields on \(M\). We will make use of the \emph{relative} \(k\)-fold product \( \prod_{M, i=1}^k TM\) of \(TM \) over \(M\); this is simply the space of \(k\)-tuples of tangent vectors \(w_1, \dots, w_k \in TM\) such that \(\pi(w_i)=\pi(w_j) \in M\) for all \(i, j\). The relative product contains a subspace \(N \subseteq \prod_{M, i=1}^k TM\) consisting of linearly dependent \(k\)-tuples \((w_1, \dots, w_k)\). Note that this subspace is \emph{not} a submanifold: on each fiber \(\prod_{i=1}^k T_xM\) it is a \emph{vanishing locus} of a product \(\prod_J m_J (w_1, \dots, w_k)\) of minors as appearing in the above proof.\footnote{That is, one can verify that \(N\) is singular using the Jacobian criterion.} However, it does admit a \emph{stratification} 
\begin{equation}
    N_0 \subseteq N_1 \subseteq \dots, \subseteq N_{k-1} = N
\end{equation}
such that \(N_{i} \setminus N_{i-1} \) is a (not necessarily closed) submanifold of \( \prod_{M, i=1}^k TM\) for \(i=1,\dots, k-1\). Namely, one defines \(N_i \) to consist of the \((w_1, \dots, w_k)\) whose associated \(m\times k \) matrix has rank less than or equal to \(i\). 

\label{claim:ae}
\textbf{Claim:} Let \(V \subset \Gamma(M, TM) \) be a finite dimensional linear subspace,  and assume that for each \(x \in M \) the natural linear map \(\Psi: V \to T_x M \) defined as \(\Psi(v) = v(x)\) is surjective. Then, for almost every \((v_1,\dots, v_k ) \in V^k \) the set defined in \cref{eq:non-degen-locus} is a dense open set with measure 0 complement.

Note that in Claim \ref{claim:ae}, the fact that $\Psi$ is surjective for each $x \in M$ does not imply that $V$ is equal to $\Gamma(M,TM)$ (for example, $\Gamma(M,TM)$ will generally be infinite dimensional). Rather, for any $v \in T_xM$ there is a vector field that takes value $v$ at $x$.
We will not provide a full proof of this claim, merely a sketch. To begin, consider the natural map 
\begin{equation}
    \Phi: V^k \times M \to \prod_{M, i=1}^k TM
\end{equation}
defined by \(\Phi((v_1, \dots, v_k), x) \mapsto (v_1(x), \dots, v_k(x))\). \emph{If} \(N \) \emph{were} a smooth closed submanifold of \( \prod_{M, i=1}^k TM\) (as stated above, it isn't, this is just a thought experiment to motivate a proof sketch), then we could proceed as follows: the condition on \(\Psi\) could be used to show that the map \(\Phi \) is transverse to \(N \). Then, the parametric transversality theorem (Theorem 6.35 in \cite{lee2013smooth}) would imply that for almost all \((v_1,\dots, v_k ) \in V^k \), the resulting map
\begin{equation}
    \sigma: M \to \prod_{M, i=1}^k TM 
\end{equation}
defined by \(\sigma(x) = \Phi((v_1,\dots, v_k ), x) \) is transverse to \(N \). Noting that \(N \) has codimension \(1 \), we would conclude that its preimage  \( \sigma^{-1}(N) \subseteq  M \) is a closed submanifold of codimension 1 (hence a set of measure 0). Since the set defined in \cref{eq:non-degen-locus} is \(M \setminus \sigma^{-1}(N) \) this would complete the proof. 

Again, this proof sketch is merely a heuristic as we know that \(N \) is not smooth. However, there exist \emph{stratified} transversality theorems (hence the discussion of a stratification of \(N \)), and replacing our heuristic application of the more well known parametric transversality theorem with one of these (for example the main theorem of \cite{Trotman}, see also references therein) yields a proof strategy.

\begin{proof}[Proof of Lemma \ref{lem:stable-rank-noninc}]
We first reduce to the case where all matrices are square: if not, letting \(p = \max\{l, m, n\}\) we may embed \(A\) and \(B\) in the upper left block of \(p\times p \) matrices, and direct calculation shows
\begin{equation}
    \begin{pmatrix}
    A & 0\\
    0&0\\
    \end{pmatrix}
    \begin{pmatrix}
    B & 0\\
    0&0\\
    \end{pmatrix}
    = 
    \begin{pmatrix}
    AB & 0\\
    0&0\\
    \end{pmatrix}.
\end{equation}
Moreover, such padding by zeros does not alter singular values, hence leaves all stable ranks in sight unchanged. 

From now on we assume \(A\), \(B\) and  thus \(AB\) are \(n\times n \) matrices. Let \(s(A) = (s(A)_1, \dots, s(A)_n) \) be the \(n\)-dimensional vector of sorted singular values of \(A\) (and similarly for \(B\) and \(AB\)).
By Theorem 
IV.2.5 in \cite{bhatiaMatrixAnalysis1996}, 
\begin{equation}
   s(A B)^2 <_w s(A)^2 s(B)^2
\end{equation}
where \(<_w\) denotes weak submajorization, and the product on the right hand side is coordinatewise (a.k.a. Hadamard) multiplication. From this and the definition of weak submajorization in Section II.1 \cite{bhatiaMatrixAnalysis1996}) it follows that 
\begin{equation}
\label{eq:holder}
    \begin{split}
        &\sum_{i=1}^n s(A B)_i^2 \leq \sum_{i=1}^n s(A)_i^2 s(B)_i^2 \leq s(A)_1^2 \sum_{i=1}^n s(B)_i^2 \\
        & = (s(A)_1 s(B)_1)^2 \frac{1}{s(B)_1^2} \sum_{i=1}^n s(B)_i^2 
    \end{split}
\end{equation}
where the first inequality follows from the definition of weak submajorization in Section II.1 \cite{bhatiaMatrixAnalysis1996}) and the second inequality uses the fact that \( s(A)_1 \geq \dots \geq s(A)_n \).\footnote{Alternatively, this is the \(p=\infty\), \(q=1\) H\"older's inequality.} We now divide both sides of \cref{eq:holder} by \(s(A B)_1^2\) to obtain 
\begin{equation}
    \begin{split}
        &r(AB) = \frac{1}{s(A B)_1^2}\sum_i s(A B)_i^2\\
        &\leq \frac{(s(A)_1 s(B)_1)^2}{s(A B)_1^2} \frac{1}{s(B)_1^2} \sum_i s(B)_i^2 \\
        &=\Big(\frac{\lVert A \rVert_{\mathrm{spec}} \lVert B \rVert_{\mathrm{spec}}}{\lVert AB \rVert_{\mathrm{spec}}}\Big)^2 r(B).
    \end{split}
\end{equation}
By a symmetric argument,\footnote{For example, use the facts that \((AB)^T = B^T A^T\) and transposing doesn't change singular values.}
\begin{equation}
    r(AB) \leq \Big(\frac{\lVert A \rVert_{\mathrm{spec}} \lVert B \rVert_{\mathrm{spec}}}{\lVert AB \rVert_{\mathrm{spec}}}\Big)^2 r(A).
\end{equation}
\end{proof}

\begin{proof}[Proof of \cref{prop:orbit-stab}]
This follows directly from Theorem 3.26 in \cite{kirillov}, which shows that the Lie algebra of \(G_x \) is the kernel of the natural map \(\mathfrak{g} \to T_x M \). Hence if this natural map is injective, \(G_x \) has 0-dimensional Lie algebra and is thus a 0-dimensional closed subgroup of \(G\), i.e. a discrete subgroup. 
\end{proof}

\section{Experimental details}
\label{appendix-experimental-details}

We ran all of our experiments on an Nvidia A100 GPU. The specific hyperparameters that we used in training can be found in Table \ref{table-hyperparameters}.

\subsection{Augmentation frames}
\label{appendix-augmentation-frames}

In Table \ref{tab:augmentation-transformations} we describe the augmentations that we use in the experiments with augmentation frames in this paper, the software library used to implement them, and the parameter settings that we used to approximate the tangent vector corresponding to each.


\subsection{Handling edge effects in image translation and rotation}

Because images have finite support, pixels with zero value appear at the edges when the image is rotated by $\theta$ degrees (where $\theta \neq 0^\circ$, $90^\circ$, $180^\circ$, or $270^\circ$). Thus, these augmentations violate our goal of only using augmentations that produce naturalistic images. To handle this situation in practice, we increase the size of the image so that we can crop out fractional numbers of pixels (after rotation). In detail we:
\begin{enumerate}
    \item resize the image to $\times8$ its original size,
    \item rotate the image by $5$ degrees,
    \item crop out empty pixels at the corners by removing $20$ pixels around the border,
    \item resize the image back to its original size.
\end{enumerate}

\begin{figure*}[h]
\begin{center}
\includegraphics[width=.19\linewidth]{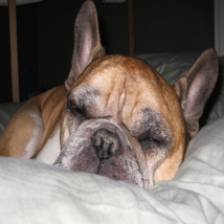}
\includegraphics[width=.19\linewidth]{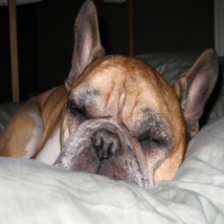}
\includegraphics[width=.19\linewidth]{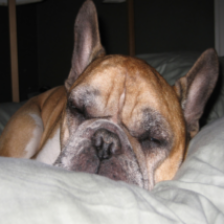}
\includegraphics[width=.19\linewidth]{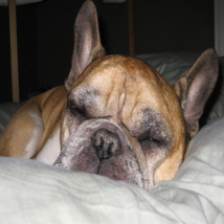}
\includegraphics[width=.19\linewidth]{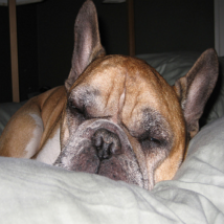}\\
\includegraphics[width=.19\linewidth]{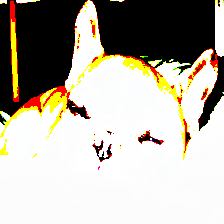}
\includegraphics[width=.19\linewidth]{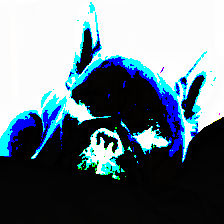}
\includegraphics[width=.19\linewidth]{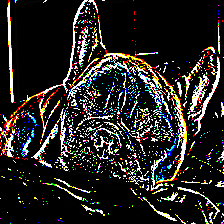}
\includegraphics[width=.19\linewidth]{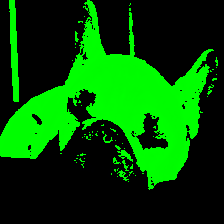}
\includegraphics[width=.19\linewidth]{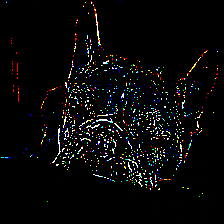}\\
\includegraphics[width=.19\linewidth]{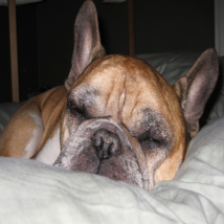}
\includegraphics[width=.19\linewidth]{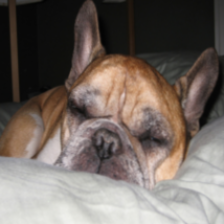}
\includegraphics[width=.19\linewidth]{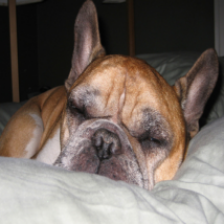}
\includegraphics[width=.19\linewidth]{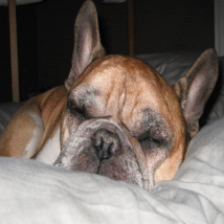}\\
\includegraphics[width=.19\linewidth]{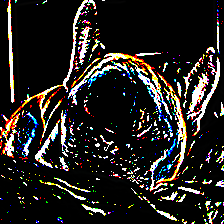}
\includegraphics[width=.19\linewidth]{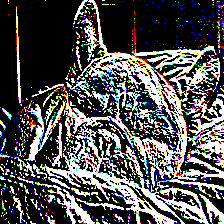}
\includegraphics[width=.19\linewidth]{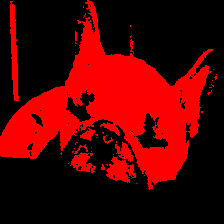}
\includegraphics[width=.19\linewidth]{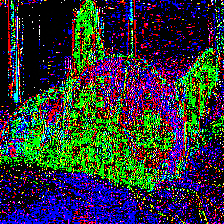}

\end{center}
\caption{\textbf{(First and third rows)} A subset of the sample perturbations, from left to right and top to bottom: brightness, contrast, crop with bilinear interpolation, hue, sharpness, crop with linear interpolation, rotation, saturation, jpeg compression. \textbf{(Second and fourth rows)} The difference between the original and perturbed images (above). \label{fig-example-augmentations}}
\end{figure*}

\begin{figure*}[h]
\begin{center}
\includegraphics[width=.19\linewidth]{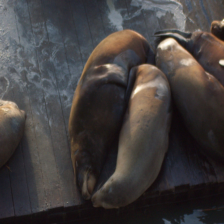}
\includegraphics[width=.19\linewidth]{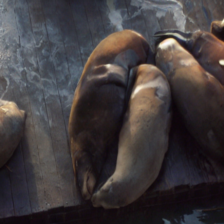}
\includegraphics[width=.19\linewidth]{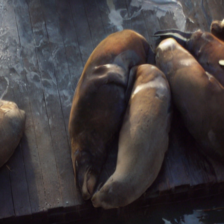}
\includegraphics[width=.19\linewidth]{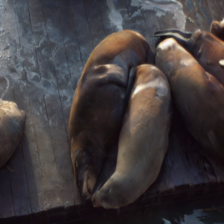}\\
\hspace{29mm}\includegraphics[width=.19\linewidth]{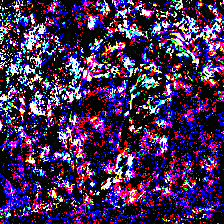}
\includegraphics[width=.19\linewidth]{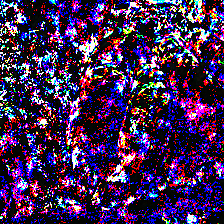}
\includegraphics[width=.19\linewidth]{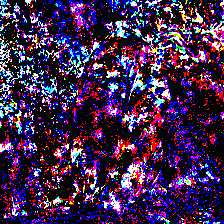}

\end{center}
\caption{\textbf{(First row)} The original ImageNet image and perturbations of this image sampled using the Boomerang method and stable diffusion. \textbf{(Second row)} The difference between the original image and each perturbation. \label{fig-example-sd}}
\end{figure*}

\subsection{Hidden Layers Used for Each Architecture}

A significant amount of this work depends on querying the hidden representations of different models. In Tables \ref{table-vit-layers}-\ref{table-alexnet-layers} we list the hidden layers used in each model type in the experiments. 

\begin{table*}[th]
\caption{Layers for torchvision ViT Base 16. \label{table-vit-layers}}
\begin{center}
\begin{tabular}{lr}
\toprule
\textbf{Layer name} & \textbf{Layer number} 
 \\ \midrule 
\codeword{encoder.layers.encoder_layer_1.mlp} & 1\\
\codeword{encoder.layers.encoder_layer_3.mlp} & 2\\
\codeword{encoder.layers.encoder_layer_5.mlp} & 3\\
\codeword{encoder.layers.encoder_layer_7.mlp} & 4\\
\codeword{encoder.layers.encoder_layer_9.mlp} & 5\\
\codeword{encoder.layers.encoder_layer_11.mlp} & 6\\
\codeword{getitem_5} & 7\\
\end{tabular}
\end{center}
\end{table*}

\begin{table*}[th]
\caption{Layers for torchvision Swin T. \label{table-swin-layers}}
\begin{center}
\begin{tabular}{lr}
\toprule
\textbf{Layer name} & \textbf{Layer number} 
 \\ \midrule 
\codeword{features.1.0.mlp} & 1 \\
\codeword{features.3.0.mlp} & 2 \\
\codeword{features.5.0.mlp} & 3 \\
\codeword{features.5.2.mlp} & 4 \\ 
\codeword{features.5.4.mlp} & 5 \\ 
\codeword{features.7.0.mlp} & 6 \\ 
\codeword{features.7.1.mlp} & 7 \\
\codeword{flatten} & 8\\
\end{tabular}
\end{center}
\end{table*}

\begin{table*}[th]
\caption{Layers for torchvision ResNeXT50 32$\times$4d. \label{table-resnext50-layers}}
\begin{center}
\begin{tabular}{lr}
\toprule
\textbf{Layer name} & \textbf{Layer number} 
 \\ \midrule 
\codeword{layer1.0.add} & 1 \\
\codeword{layer2.0.add} & 2 \\
\codeword{layer3.0.add} & 3 \\
\codeword{layer4.0.add} & 4 \\
\codeword{flatten} & 5 \\
\end{tabular}
\end{center}
\end{table*}

\begin{table*}[th]
\caption{Layers for torchvision MobileNetV3 (small). \label{table-mobilenetv3-layers}}
\begin{center}
\begin{tabular}{lr}
\toprule
\textbf{Layer name} & \textbf{Layer number} 
 \\ \midrule 
\codeword{features.1.block.2} & 1\\
\codeword{features.3.add} & 2\\
\codeword{features.5.add} & 3\\
\codeword{features.7.block.3} & 4\\
\codeword{features.9.block.3} & 5\\
\codeword{features.11.add} & 6\\
\codeword{flatten} & 7\\
\codeword{classifier.0} & 8\\
\codeword{classifier.1} & 9\\
\codeword{classifier.2} & 10\\
\end{tabular}
\end{center}
\end{table*}

\begin{table*}[th]
\caption{Layers for torchvision InceptionV3. \label{table-inceptionv3-layers}}
\begin{center}
\begin{tabular}{lr}
\toprule
\textbf{Layer name} & \textbf{Layer number} 
 \\ \midrule 
\codeword{Conv2d_1a_3x3.relu} & 1\\
\codeword{Conv2d_2b_3x3.relu} & 2\\
\codeword{Conv2d_4a_3x3.relu} & 3\\
\codeword{Mixed_5b.cat} & 4\\
\codeword{Mixed_5d.cat} & 5\\
\codeword{Mixed_6a.cat} & 6\\
\codeword{Mixed_6c.cat} & 7\\
\codeword{Mixed_6e.cat} & 8\\
\codeword{Mixed_7a.cat} & 9\\
\codeword{Mixed_7b.cat_2} & 10\\
\codeword{flatten} & 11
\end{tabular}
\end{center}
\end{table*}

\begin{table*}[th]
\caption{Layers for torchvision DenseNet121. \label{table-densenet121-layers}}
\begin{center}
\begin{tabular}{lr}
\toprule
\textbf{Layer name} & \textbf{Layer number} 
 \\ \midrule 
\codeword{features.denseblock1.denselayer2.cat} & 1\\
\codeword{features.denseblock2.denselayer1.cat} & 2\\
\codeword{features.denseblock2.denselayer7.cat} & 3\\
\codeword{features.denseblock2.cat} & 4\\
\codeword{features.denseblock3.denselayer6.cat} & 5\\
\codeword{features.denseblock3.denselayer12.cat} & 6\\
\codeword{features.denseblock3.denselayer18.cat} & 7\\
\codeword{features.denseblock3.cat} & 8\\
\codeword{features.denseblock4.denselayer6.cat} & 9\\
\codeword{features.denseblock4.denselayer12.cat} & 10\\
\codeword{features.denseblock4.cat} & 11\\
\codeword{flatten} & 12\\
\end{tabular}
\end{center}
\end{table*}

\begin{table*}[th]
\caption{Layers for timm resnetv2 101$\times$1 (BiT). \label{table-bit-layers}}
\begin{center}
\begin{tabular}{lr}
\toprule
\textbf{Layer name} & \textbf{Layer number} 
 \\ \midrule 
\codeword{stages.0.blocks.0.add} & 1\\
\codeword{stages.1.blocks.0.add} & 2\\
\codeword{stages.2.blocks.0.add} & 3\\
\codeword{stages.2.blocks.4.add} & 4\\
\codeword{stages.2.blocks.8.add} & 5\\
\codeword{stages.2.blocks.12.add} & 6\\
\codeword{stages.2.blocks.16.add} & 7\\
\codeword{stages.2.blocks.20.add} & 8\\
\codeword{stages.3.blocks.2.add} & 9\\
\codeword{head.global_pool.flatten} & 10\\
\end{tabular}
\end{center}
\end{table*}

\begin{table*}[th]
\caption{Layers for torchvision ConvNeXT small. \label{table-convnext-layers}}
\begin{center}
\begin{tabular}{lr}
\toprule
\textbf{Layer name} & \textbf{Layer number} 
 \\ \midrule 
\codeword{features.1.0.block.0} & 1 \\
\codeword{features.3.0.block.0} & 2 \\
\codeword{features.5.0.block.0} & 3 \\ 
\codeword{features.5.8.block.0} & 4 \\
\codeword{features.5.16.block.0} & 5 \\
\codeword{features.7.1.block.0} & 6 \\
\codeword{classifier.1} & 7 \\
\end{tabular}
\end{center}
\end{table*}

\begin{table*}[th]
\caption{Layers for torchvision AlexNet. \label{table-alexnet-layers}}
\begin{center}
\begin{tabular}{lr}
\toprule
\textbf{Layer name} & \textbf{Layer number} 
 \\ \midrule 
\codeword{features.1} & 1\\
\codeword{features.4} & 2\\
\codeword{features.7} & 3\\
\codeword{features.9} & 4\\
\codeword{features.11} & 5\\
\codeword{classifier.2} & 6\\
\codeword{classifier.5} & 7\\
\codeword{flatten} & 8\\
\end{tabular}
\end{center}
\end{table*}

\begin{table*}[th]
\caption{Layers for torchvision ResNet50. \label{table-resnet50-layers}}
\begin{center}
\begin{tabular}{lr}
\toprule
\textbf{Layer name} & \textbf{Layer number} 
 \\ \midrule 
\codeword{layer1.2.relu_2} & 1\\
\codeword{layer2.3.relu_2} & 2\\
\codeword{layer3.5.relu_2} & 3\\
\codeword{layer4.2.relu_2} & 4\\
\codeword{flatten} & 5\\
\end{tabular}
\end{center}
\end{table*}

\begin{table*}[th]
\caption{Layers for torchvision ResNet18. \label{table-resnet18-layers}}
\begin{center}
\begin{tabular}{lr}
\toprule
\textbf{Layer name} & \textbf{Layer number} 
 \\ \midrule 
\codeword{layer1.1.relu_1} & 1\\
\codeword{layer2.1.relu_1} & 2\\
\codeword{layer3.1.relu_1} & 3\\
\codeword{layer4.0.relu_1} & 4\\
\codeword{flatten} & 5\\
\end{tabular}
\end{center}
\end{table*}

\begin{table}[th]
\caption{Hyperparameters used when training the ResNet18 on ImageNet. \label{table-hyperparameters}}
\begin{center}
\begin{tabular}{rrrr}
\toprule
Training hyperparameters &  \\
\midrule 
Optimzer & SGD \\
Learning rate  & 0.5  \\
Learning rate scheduler  & cyclic  \\
Learning rate peak epoch & 2  \\
Momentum & 0.9  \\
Batch size       & 1024   \\
Epochs & 16\\
Weight decay & $5\mathrm{e}{-5}$\\
Label smoothing & 0.1\\
BlurPool? & Yes\\
Pretrained? & No\\
\end{tabular}
\end{center}
\end{table}

\end{document}